\def\EE{\mathbb{E}}
\def\RR{\mathbb{R}}
\DeclareMathOperator{\tr}{tr}
\def\mJ{\mathbf{J}}
\def\vx{\mathbf{x}}
\def\vy{\mathbf{y}}
\def\prune{\text{prune}}
\def\noise{\text{noise}}
\def\dpsgd{\text{DPSGD}}
\newcommand{\norm}[1]{\left\|#1\right\|}
\def\dP{\mathcal{P}}
\def\eqref#1{equation~\ref{#1}}
\def\1{\bm{1}}
\def\vepsilon{{\bm{\epsilon}}}
\def\vx{{\bm{x}}}
\def\vy{{\bm{y}}}
\def\evx{{x}}
\def\mI{{\bm{I}}}
\def\mJ{{\bm{J}}}
\def\mV{{\bm{V}}}
\def\mSigma{{\bm{\Sigma}}}
\DeclareMathAlphabet{\mathsfit}{\encodingdefault}{\sfdefault}{m}{sl}
\SetMathAlphabet{\mathsfit}{bold}{\encodingdefault}{\sfdefault}{bx}{n}
\def\sA{{\mathbb{A}}}
\def\emSigma{{\Sigma}}
\newcommand{\E}{\mathbb{E}}
\newcommand{\R}{\mathbb{R}}
\newcommand{\Var}{\mathrm{Var}}
\def\dD{\mathcal{D}}
\newtheorem{definition}{Definition}
\newtheorem{remark}{Remark}
\newtheorem{lemma}{Lemma}
\newtheorem*{lemma*}{Lemma}
\newtheorem*{claim*}{Claim}
\newtheorem{theorem}{Theorem}
\begin{document}
\date{\today}
\title{Optimal Defenses Against Gradient Reconstruction Attacks}

\author{\textbf{Yuxiao Chen} \\
Peking University\\
2100010776@stu.pku.edu.cn\\
\and
\textbf{Gamze Gürsoy} \\
Columbia University\\
gamze.gursoy@columbia.edu\\
\and
\textbf{Qi Lei} \\
New York University\\
ql518@nyu.edu\\
}

\maketitle

\begin{abstract}
Federated Learning (FL) is designed to prevent data leakage through collaborative model training without centralized data storage. However, it remains vulnerable to gradient reconstruction attacks that recover original training data from shared gradients. To optimize the trade-off between data leakage and utility loss, we first derive a theoretical lower bound of reconstruction error (among all attackers) for the two standard methods: adding noise, and gradient pruning. We then customize these two defenses to be parameter- and model-specific and achieve the optimal trade-off between our obtained reconstruction lower bound and model utility.
Experimental results validate that our methods outperform Gradient Noise and Gradient Pruning by protecting the training data better while also achieving better utility. The code for this project is available \href{https://github.com/cyx78/Optimal_Defenses_Against_Gradient_Reconstruction_Attacks}{here}.
\end{abstract}

\section{Introduction} Recent advancements in machine learning have led to remarkable achievements across multiple domains. These successes are driven largely by the ability to gather vast, diverse datasets to train these powerful models. However, this can be challenging to obtain in certain sectors such as healthcare and finance due to concerns about privacy and institutional restrictions. 

Federated or Collaborative Learning (FL)~\citep{mcmahan2023communicationefficientlearningdeepnetworks} has emerged as a solution to these concerns. Federated learning is a machine learning approach where multiple institutions or devices collaboratively train a model while keeping their data localized. Instead of sharing raw data, each participant shares model updates, aggregated centrally to create a global model that benefits from all participants' insights without compromising data privacy. The assumption is that summary-level information about the model (whether the model weights or the intermediate gradients) contains less information about the trained data.

However, FL is not immune to privacy risks, one type of attack that may harm privacy is the Gradient Reconstruction Attack (GRA), where adversaries attempt to reconstruct original training data from the shared gradients. Methods such as DLG \citep{zhu2019deepleakagegradients}, CAFE \citep{jin2022cafecatastrophicdataleakage}, and GradInversion \citep{yin2021gradientsimagebatchrecovery} have shown the feasibility of these attacks. To mitigate these risks, several defense mechanisms have been proposed. The most common approach is perturbing the gradients, such as DP-SGD \citep{abadi16deep} and Gradient Pruning \citep{zhu2019deepleakagegradients}.  Although these methods offer some level of data protection, they often encounter a trade-off between maintaining privacy and preserving model performance \citep{zhang2023tradingprivacyutilityefficiency}. 
\begin{figure}
    \centering
    \includegraphics[width=1\linewidth]{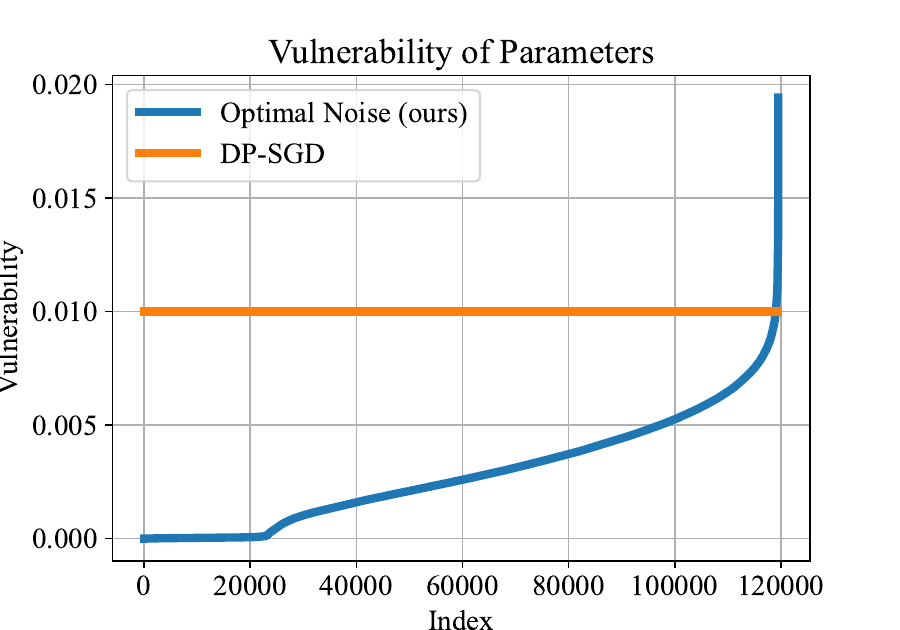}
    \vspace{-15pt}
    \caption{DP-SGD treats all parameters with the same vulnerability, while our method distinguishes the vulnerability of each parameter and designs a customized defense strategy.}
    \label{fig:introduction}
\end{figure}

In this work, we develop a new defense mechanism based on generalizing existing gradient perturbation methods to an optimal and parameter-specific defense. As also noted by \citet{shi-etal-2022-selective}, a universal defense strategy provides undifferentiated protection and is not optimal for utility-privacy trade-offs.
We customize defenses for each parameter and examine how these adjustments impact both privacy and utility, as illustrated in Figure \ref{fig:introduction}. Our objective is to optimize the balance between a model's resilience to gradient reconstruction attacks and its training effectiveness. Our primary contributions are: 
\begin{itemize} 
\item We establish a theoretical lower bound for the expected reconstruction error, which can be easily evaluated. 
\item We propose two defense mechanisms—Optimal Gradient Noise and Optimal Gradient Pruning—that maximize this bound for a given level of utility. 
\end{itemize}

In Section \ref{sec:background}, we provide a brief overview of federated learning and theoretical backgrounds for our method. In Section \ref{sec:theory}, we present the theoretical foundation for our lower bound and optimal defense methods, and present the implementations of our proposed algorithms. Section \ref{sec:experiments} evaluates their effectiveness against gradient reconstruction attacks in image classification tasks.

\subsection{Related Works} 
Federated learning (FL) was introduced by \citet{mcmahan2023communicationefficientlearningdeepnetworks} as a framework for collaborative model training without centralized data storage. Differential privacy (DP) \citep{10.1007/11681878_14,10.1007/11761679_29,10.1561/0400000042} has been used to define privacy of the algorithm. \citep{abadi16deep} introduced a differential private SGD algorithm to provide DP guarantees to the trained model. \citet{stock2022defendingreconstructionattacksrenyi} used R\'enyi differential privacy to provide another type of guarantee.

However, \citet{zhu2019deepleakagegradients} revealed a significant vulnerability in FL by demonstrating how training data can be reconstructed from shared gradients using the DLG algorithm. This attack was refined through subsequent works, such as iDLG \citep{zhao2020idlgimproveddeepleakage} and Inverting Gradients \citep{geiping2020invertinggradientseasy}. More advanced techniques, including GradInversion \citep{yin2021gradientsimagebatchrecovery} and CAFE \citep{jin2022cafecatastrophicdataleakage}, further enhance reconstruction quality but often rely on additional information or specific model architectures. More works featuring attacks include \citet{pmlr-v206-wang23g,NEURIPS2021_fa84632d, chen2021understandingtrainingdataleakagegradients}.

In response to these attacks, several defense strategies have been proposed. One line of methods perturb the gradients shared to the server\citep{9578192,NEURIPS2021_91cff01a}, while another line of work like InstaHide \citep{huang2021instahideinstancehidingschemesprivate}, mixup \citep{DBLP:conf/iclr/ZhangCDL18}, pixelization \citep{fan18} focus on directly protecting the data instead of the gradients. More details about attacks and defenses could be found in \citet{ijcai2022p791,9411833,9987657}. To consolidate research in this area, \citet{liu2024datareconstructionattacksdefenses} proposed a framework to systematically analyze the effectiveness of different attacks and defenses. 

In a similar vein, \citet{fay2023adaptive} explored hyperparameter selection to optimize the privacy-utility trade-off in DP-SGD, \citet{10262055} proposed DP-SGD with adaptive noise. However, these approaches do not account for the local parameter landscape, which we address in our work. 

The lower bound on the reconstruction error was first introduced in previous work \citep{liu2024datareconstructionattacksdefenses} except that only a local approximation was used.

\section{Preliminaries}
\label{sec:background}

\paragraph{Notations.} 
We denote $\vx\in\R^m$ as the training data generated from a distribution $\dD$. $L(\cdot,\Theta):\R^m\to\R$ is the loss function parameterized by $\Theta\in\R^d$. The model gradient for $\vx$ is $g_\Theta(\vx)\coloneq \nabla_{\vx} L_\Theta(\vx)$. When no ambiguity, we write $g(\vx)$ for brevity. $\vy$ is an (random) observation generated from $\vx$: $\vy=S(g(\vx))$, where $S$ is a random mechanism such as adding noise. 
Let \( \mathcal{P}(S) \) denote the family of distributions over a set \( S \). 

\subsection{Federated Learning and Gradient Reconstruction Attacks}
Different from traditional centralized optimization where we train a model on curated datasets, federated learning (FL) collaboratively trains a model while the data remains decentralized and stored locally on clients. This setup intends to protect users' sensitive data without directly sharing them. 

In FL, each client \( u_i \in \{u_1, \dots, u_n\} \) owns a private dataset \( D_i \), and the global dataset is \( D = \bigcup_{i=1}^n D_i \). A central server aims to train a model \( \Theta \) by solving the optimization problem:
\[
\min_\Theta \sum_{i=1}^n \sum_{\vx_j \in D_i} {L}(\vx_j, \Theta).
\]
During training, stochastic gradient descent (SGD) is conducted, where a subset of (well-connected and active) clients \( U \subset \{1, \dots, n\} \) will interact with the global server: Each active client \( i \in U \) uses a subset \( D'_i \subset D_i \) to create a minibatch \( B = \bigcup_{i \in U} D'_i \). The global minibatch gradient \( \nabla_\Theta {L}(B, \Theta) \) is computed as a weighted average of the individual client gradients:
\[
\nabla_\Theta {L}(B, \Theta) = \frac{1}{|B|} \sum_{i \in U} |D'_i| \nabla_\Theta {L}(D'_i, \Theta^t).
\]
Each client shares \( \langle |D'_i|, \nabla_\Theta {L}(D'_i, \Theta^t) \rangle \) with the server, which then updates the model parameters as:
\[
\Theta^{t+1} \leftarrow \Theta^t - \eta \nabla_\Theta {L}(B, \Theta).
\]

Although these shared gradients contain less information than the raw data, there remains a risk of data leakage, as demonstrated by increasing attention recently~\cite{yin2021gradientsimagebatchrecovery,huang2021instahideinstancehidingschemesprivate,geiping2020invertinggradientseasy}. This work focuses on defending local gradients while minimizing the impact on training utility.

\subsection{The Bayesian Cramér-Rao Lower Bound}
\label{int:bcrb}
The data reconstruction problem is essentially the problem of estimation from random observations. Let $\vx \in \mathbb{R}^d$ represent training data drawn from a distribution $\dD$, $\vy\in \mathbb{R}^K$ denote random observations generated from $\vx$, and $\hat{\vx}(\vy)$ be an estimator of $\vx$. We will introduce the Bayesian Cramer-Rao lower bound that relates to the lowest possible estimation error $\E[\|\hat\vx-\vx\|^2].$ First, assume the following regularity conditions hold \citep{Crafts_2024,citeulike:10401058}:

\textbf{Assumption 1 (Support).} The support of $\dD$ is either $\mathbb{R}^d$ or an open bounded subset of $\mathbb{R}^d$ with a piecewise smooth boundary.

\textbf{Assumption 2 (Existence of Derivatives).} The derivatives $\left[ \nabla_{\vx} p(\vx, \vy) \right]_i$ for $i = 1, \ldots, d$, exist and are absolutely integrable.

\textbf{Assumption 3 (Finite Bias).} The bias, defined as 
\[
B(\vx) \coloneq \int (\hat{\vx}(\vy) - \vx) p(\vy \mid \vx) \, d\vy,
\]
is finite for all $\vx$.

\textbf{Assumption 4 (Exchanging Derivative and Integral).} The probability function $p(\vx, \vy)$ and estimator $\hat{\vx}(\vy)$ satisfy:
\begin{align*}
&\nabla_{\vx} \int p(\vx, \vy) \left[ \hat{\vx}(\vy) - \vx \right]^T d\vy \\
= &\int \nabla_{\vx} \left( p(\vx, \vy) \left[ \hat{\vx}(\vy) - \vx \right]^T \right) d\vy
\end{align*}
for all $\vx$.

\textbf{Assumption 5 (Error Boundary Conditions).} For any point $\vx$ on the boundary of $\text{supp}(\dD)$, and any sequence $\{\vx_i\}_{i=0}^{\infty}$ such that $\vx_i \in \text{supp}(\dD)$ and $\vx_i \to \vx$, we have $B(\vx_i) p(\vx_i) \to 0$.

These assumptions are satisfied by a wide range of setups. For image classification, the dataset has bounded support and the defense a differentiable density function $p(\vx, \vy)$. When we add a small Gaussian noise to the training data, all Assumptions 1 to 5 hold.  \citep{Crafts_2024}

Given these assumptions, the Bayesian Cramér-Rao Lower Bound is as follows:
\[
\mathbb{E}_{\vx, \vy} \left[ (\hat{\vx}(\vy) - \vx) (\hat{\vx}(\vy) - \vx)^T \right] \succeq \mV_B \coloneq \mJ_B^{-1};
\]
where $\mJ_B \in \mathbb{R}^{D \times D}$ is the Bayesian information matrix:
\[
\mJ_B \coloneq \mathbb{E}_{\vx, \vy} \left[ \nabla_{\vx} \log p(\vx, \vy) \nabla_{\vx} \log p(\vx, \vy)^T \right].
\]
The matrix $\mJ_B$ can be decomposed into two components: 
\[
\mJ_B = \mJ_P + \mJ_D;
\]
where $\mJ_P$ is the prior-informed term:
\[
\mJ_P \coloneq \mathbb{E}_{\vx} \left[ \nabla_{\vx} \log p(\vx) \nabla_{\vx} \log p(\vx)^T \right];
\]
and $\mJ_D$ is the data-informed term:
\[
\mJ_D \coloneq \mathbb{E}_{\vx} \left[ \mJ_F(\vx) \right].
\]
Here, $\mJ_F(\vx)$ represents the Fisher information matrix:
\[
\mJ_F(\vx) \coloneq \mathbb{E}_{\vy \mid \vx} \left[ \nabla_{\vx} \log p(\vy \mid \vx) \nabla_{\vx} \log p(\vy \mid \vx)^T \right].
\]

\section{Methodology}
\label{sec:theory}
To optimize the trade-off between the reconstruction error lower bound and training utility, we treat each observed coordinate separately and design defending strategies customized to the current data batch and model parameters, instead of a universal strategy like a constant noise level in DP-SGD. We will first present our derivation of the reconstruction error lower bound and our definition of the training utility. Then, we introduce an optimization objective to find the optimal defense parameters (such as noise's covariance matrix) that balance reconstruction error and utility. 
\subsection{The Reconstruction Error Lower Bound}
To prevent data leakage, our goal is to maximize the lower bound of the reconstruction error among all estimators (reconstruction algorithms). 

For a randomized defense mechanism $S:\R^d\to\mathcal{P}(\mathbb{R}^d)$ (e.g., adding noise to the gradients), the defended gradients are $\vy \sim S(g(\vx))$. For any reconstruction algorithm $R:\R^d\to\R^m$, the expected reconstruction error against the defense is:
$$\E_{\vx\sim\dD}\E_{\vy\sim S(g(\vx))}\norm{R(\vy)-\vx}^2.$$

\begin{definition}
\label{def:lowerbound}
For a data distribution $\dD\in\mathcal{P}(\mathbb{R}^m)$, a gradient function $g:\R^m\to\R^d$, and a defense mechanism $S:\R^d\to\mathcal{P}(\mathbb{R}^d)$, the reconstruction error lower bound $B_{\dD,S}$ is as follows:
\[B_{\dD,S}\coloneq\min_{R:\R^d\to\R^m}\E_{\vx\sim\dD}\E_{\vy\sim S(g(\vx))}\norm{R(\vy)-\vx}^2.\]
\end{definition}

We utilize the Bayesian C-R lower bound to lower bound the reconstruction error lower bound:

\begin{theorem}
\label{thm:lowerbound}
Let $B_{\dD,S}$ be as defined in Definition \ref{def:lowerbound}. Under Assumptions 1 to 5, we lower bound $B_{\dD,S}$ by:
\begin{equation}
\label{def:reconsbound}
B_{\dD,S}\ge\frac{d^2}{\EE_{x\sim\dD}\left[\tr(\mJ_F(x))\right]+d\cdot\lambda_1(\mJ_P)},
\end{equation}
where $\mJ_F(\vx)$ is given by:
\[
\mJ_F(\vx) \coloneq \mathbb{E}_{\vy\sim S(g(\vx))} \left[ \nabla_{\vx} \log p_{S(\vx)}(\vy) \nabla_{\vx} \log p_{S(\vx)}(\vy)^\top \right]
\]
and \( \mJ_P \) by:
\[
\mJ_P \coloneq \mathbb{E}_{\vx} \left[ \nabla_{\vx} \log p_\dD(\vx) \nabla_{\vx} \log p_\dD(\vx)^\top \right].
\]
\end{theorem}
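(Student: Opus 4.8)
The plan is to reduce the statement to the matrix-form Bayesian Cramér–Rao bound recalled in Section~\ref{int:bcrb} and then pass from that matrix inequality to the claimed scalar bound via two elementary eigenvalue estimates. First I would observe that $B_{\dD,S}$ is exactly the minimum mean-squared reconstruction error over all estimators $R(\vy)$, attained by the posterior mean $\hat\vx(\vy)=\EE[\vx\mid\vy]$. Since Assumptions 1–5 hold for the mechanism $\vy\sim S(g(\vx))$ — with conditional density $p_{S(\vx)}(\vy)=p(\vy\mid\vx)$, so that the $\mJ_F$ of the theorem coincides with the Fisher information of Section~\ref{int:bcrb} — the Bayesian CRB applies to every estimator, and in particular to the minimizer:
\[
\EE_{\vx,\vy}\!\left[(\hat\vx(\vy)-\vx)(\hat\vx(\vy)-\vx)^\top\right]\succeq \mJ_B^{-1},\qquad \mJ_B=\mJ_P+\mJ_D.
\]
Taking the trace of both sides of this PSD inequality — which is monotone on the positive semidefinite cone — yields $B_{\dD,S}\ge\tr(\mJ_B^{-1})$.

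Next I would lower-bound $\tr(\mJ_B^{-1})$ purely in terms of $\tr(\mJ_B)$. Writing the nonnegative eigenvalues of the $d\times d$ matrix $\mJ_B$ as $\mu_1,\dots,\mu_d$, the Cauchy–Schwarz (equivalently AM–HM) inequality gives
\[
\tr(\mJ_B^{-1})=\sum_{i=1}^d\frac{1}{\mu_i}\ge\frac{d^2}{\sum_{i=1}^d\mu_i}=\frac{d^2}{\tr(\mJ_B)},
\]
which holds trivially (as $+\infty$) if $\mJ_B$ is singular. It then remains to control $\tr(\mJ_B)$. Linearity of trace and expectation together with $\mJ_D=\EE_\vx[\mJ_F(\vx)]$ give $\tr(\mJ_B)=\EE_{\vx}[\tr(\mJ_F(\vx))]+\tr(\mJ_P)$, and bounding the trace of the PSD matrix $\mJ_P$ by its largest eigenvalue, $\tr(\mJ_P)=\sum_i\lambda_i(\mJ_P)\le d\,\lambda_1(\mJ_P)$, enlarges the denominator and hence only weakens the fraction. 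Chaining the three estimates reproduces exactly the claimed bound \eqref{def:reconsbound}.

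The routine parts are the two eigenvalue inequalities; the step needing the most care is the first, namely justifying that the single scalar quantity $B_{\dD,S}=\min_R\EE\norm{R(\vy)-\vx}^2$ is genuinely governed by the matrix bound. The subtlety is that the Bayesian CRB is a statement about the full error covariance of an \emph{arbitrary} estimator, so I must argue that it applies to the MMSE-optimal $R$ and that tracing the matrix inequality is legitimate. I would also verify that the regularity Assumptions 1–5 are precisely what license the derivative–integral interchange underlying the CRB for the chosen defense $S$, and confirm the identification $p_{S(\vx)}(\vy)=p(\vy\mid\vx)$, so that $\mJ_B=\mJ_P+\EE_\vx[\mJ_F(\vx)]$ is the same Bayesian information matrix as in Section~\ref{int:bcrb}.
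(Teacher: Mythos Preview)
Your proposal is correct and follows the same overall skeleton as the paper (apply the Bayesian CRB, take the trace, then pass to a scalar bound via two eigenvalue estimates), but you reach the final inequality by a slightly different and more elementary route. The paper first invokes Weyl's inequality to control each eigenvalue of $\mJ_B$ individually,
\[
\lambda_i(\mJ_B)\le\lambda_i(\mJ_D)+\lambda_1(\mJ_P),
\]
and only afterwards applies Cauchy--Schwarz to the resulting sum $\sum_i 1/(\lambda_i(\mJ_D)+\lambda_1(\mJ_P))$. You instead apply AM--HM directly to the eigenvalues of $\mJ_B$ to obtain $\tr(\mJ_B^{-1})\ge d^2/\tr(\mJ_B)$, and then use the trivial bound $\tr(\mJ_P)\le d\,\lambda_1(\mJ_P)$ on the trace. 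Both orderings land on exactly the same final inequality; yours avoids Weyl altogether and is a line shorter, while the paper's version has the minor advantage that its intermediate expression still carries the individual eigenvalues of $\mJ_D$, which could in principle be exploited for a sharper bound if one stopped before the last Cauchy step.
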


Here, $\mJ_F(\vx)$ is data-informed and depends on the defense method $S$, while \( \mJ_P \) is prior-informed and depends only on the distribution $\dD$. If the prior is flat, $\lambda_1(\mJ_P)\approx 0$.

\begin{remark}
The lower bound decreases with $\tr(\mJ_F(x))$. Thus, to improve our reconstruction error lower bound, we minimize $\tr(\mJ_F(x))$ for our defense method.
\end{remark}

For mixed defense methods, we derive a bound similar to Eq. \ref{def:reconsbound}. Let $S$ be defined as $Q(g(\vx),i)$, where $i$ is an identifier sampled from distribution $\mathcal{I}$. For each $i$, $Q(\cdot, i)$ satisfies Assumptions 1 to 5 independently, representing a unique defense mechanism.

Using Jensen's inequality, we obtain the lower bound for mixed defense:
\begin{align}
B_{\dD,\mathcal{I}}=&\E_{i\sim\mathcal{I}}B_{\dD,Q(\cdot,i)}\notag\\
\label{def:mixedlowerbound}
\ge&\frac{d^2}{\E_{i\sim\mathcal{I}}\EE_{x\sim\dD}\left[\tr(\mJ_{F,Q(\cdot,i)}(x))\right]+d\cdot\lambda_1(\mJ_P)},
\end{align}
where $\mJ_{F,Q(\cdot,i)}(\vx)$ represents
\[
\mathbb{E}_{\vy\sim Q(\vx,i)} \left[ \nabla_{\vx} \log p_{S(\vx)}(\vy) \nabla_{\vx} \log p_{S(\vx)}(\vy)^\top \right].
\]

\subsection{Training Utility}
To assess utility, we analyze the model loss after one step of gradient descent update. Due to the complexity of the loss landscape, we make an approximation by the first-order Taylor expansion. The second-order Taylor approximation might seem straightforward and accurate, but the result is only meaningful when the loss function is convex with respect to the model parameters.\footnote{Otherwise, the approximation will suggest that larger noise increases utility, leading to an unrealistic result of infinitely large optimal noise.} \citet{fay2023adaptive} analyzed the utility of DP-SGD by using the lower bound of the expected loss, derived by assuming the loss function M-smooth. However, this oversimplifies the loss landscape by using the same isotropic convex function regardless of training data or model parameters. Optimizing this bound also requires choosing the optimal learning rate, while we aim to separate the defense method from the learning rate to make our defense more general.

To avoid these issues, we use the expectation and variance of the model loss after one gradient update, approximated by the first-order Taylor expansion, as our utility measure. These measures 1) are independent of the learning rate; and 2) contain information about the loss function's landscape. A good defense method should minimally impact training utility, therefore we maximize the expectation of the training loss and minimize its variance.

\begin{definition}
Given training data $\vx\in\R^m$ from distribution $\dD$, a model with $d$ parameters $\Theta$, and a loss function $L:\R^m\times\R^d\to\R$, the first-order utility of a defense method $S$ on $\vx$ is the expected decrease in loss after one gradient update:
\begin{equation}
\label{eq:firstorderloss}
U_1(S,\Theta)=\E_{\vx\sim\dD}\E_{\vy\sim S(\nabla_\vx L(\vx,\Theta))}\nabla_\vx L(\vx,\Theta)\cdot \vy.
\end{equation}
The second-order utility is defined as the negative variance of the loss after the update:
\begin{equation}
\label{eq:secondorderloss}
U_2(S,\Theta)=-\E_{\vx\sim\dD}\Var_{\vy\sim S(\nabla_\vx L(\vx,\Theta))}\nabla_\vx L(\vx,\Theta)\cdot \vy.
\end{equation}
\end{definition}
We optimize the first-order utility first, and when the first-order utility is constant, we compare the second-order utility.

\subsection{Optimal Gradient Noise}
\label{sec:optimalnoise}

\paragraph{Gradient Noise.} One of the simplest defense methods, also a step in DP-SGD \citep{abadi16deep}, is to add Gaussian noise to the model gradients before sharing. For a given covariance matrix $\mSigma\in\R^{d\times d}$, the gradient noise defense is as follows:
\begin{equation}
\label{def:noise}
S_{\noise,\mSigma}(\vx)=\mathcal{N}(\vx,\mSigma).
\end{equation}

\paragraph{Optimal Gradient Noise.} The first-order utility defined in Eq. \ref{eq:firstorderloss} remains constant regardless of the choice of the covariance matrix $\mSigma$:
\[
U_1(S_{\noise,\mSigma},\Theta) = \E_{\vx\sim\dD}\nabla_\vx L(\vx,\Theta)\cdot \nabla_\vx L(\vx,\Theta)^\top.
\]
Thus, we focus on maximizing the second-order utility. Assuming independent noise across parameters (as in DP-SGD), we limit our analysis to diagonal matrices. In this case, the second-order utility equals:
\begin{equation}
U_2(S_{\noise,\mSigma},\Theta) = -\sum_{i=1}^d \left( \frac{\partial L(\vx,\Theta)}{\partial \vx_i} \right)^2 \emSigma_{i,i}.
\end{equation}
For a higher reconstruction error lower bound, we minimize $\E_{\vx\sim\dD}\tr(\mJ_F(\vx))$, where:
\begin{equation}
\label{eq:noisebound}
\mJ_F(\vx) = \sum_{i=1}^d\frac{\norm{\nabla_{\vx} g(\vx)}^2}{ \emSigma_{i,i}}.
\end{equation}

This decomposition allows us to separate the influence the defense of each parameter has on utility and privacy, setting the stage for deriving the optimal noise.

\begin{theorem}[Optimal Gradient Noise]
\label{thm:optimalnoise}
Under assumptions 1 to 5, and assuming $\E_{\vx\sim\dD}g_i(\vx)^2 > 0$ for all $i$, \footnote{Special cases where certain entries of $\E_{\vx\sim\dD}g_i(\vx)^2$ are zero (e.g., in models with ReLU activation) are discussed in the appendix.}
the optimal noise matrix $\mSigma$ for a given utility budget $U_2(S_{\noise,\mSigma},\Theta)\ge -C$ has diagonal elements:
\[
\emSigma_{i,i} = \lambda \sqrt{\frac{\E_{\vx\sim\dD}\norm{\nabla_\vx g_i(\vx)}^2}{ \E_{\vx\sim\dD}g_i(\vx)^2}},
\]
where $\lambda$ is a constant, and $g_i$ is the $i$-th component of the gradient $g(\vx)= \nabla_{\vx} L_\Theta(\vx)$.
\end{theorem}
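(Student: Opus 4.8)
The plan is to read Theorem~\ref{thm:optimalnoise} as the solution of a constrained optimization problem: among all diagonal covariance matrices satisfying the utility budget $U_2(S_{\noise,\mSigma},\Theta)\ge -C$, find the one maximizing the reconstruction lower bound of Theorem~\ref{thm:lowerbound}. Since $\lambda_1(\mJ_P)$ depends only on $\dD$ and is therefore fixed, the Remark tells us that maximizing the bound is equivalent to minimizing $\E_{\vx\sim\dD}\tr(\mJ_F(\vx))$. So I would first reduce both the objective and the constraint to explicit functions of the diagonal entries $\sigma_i\coloneq\emSigma_{i,i}>0$.

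For the objective, I would compute the Fisher information of the Gaussian channel $\vy\mid\vx\sim\mathcal{N}(g(\vx),\mSigma)$: differentiating the log-density in $\vx$ gives $\nabla_\vx\log p=(\partial g/\partial\vx)^\top\mSigma^{-1}(\vy-g(\vx))$, and taking the outer-product expectation collapses the centered Gaussian second moment $\E[(\vy-g)(\vy-g)^\top]=\mSigma$ to give $\mJ_F(\vx)=(\partial g/\partial\vx)^\top\mSigma^{-1}(\partial g/\partial\vx)$. Tracing against diagonal $\mSigma$ reproduces Eq.~\ref{eq:noisebound}, so
\[
\E_{\vx\sim\dD}\tr(\mJ_F(\vx))=\sum_{i=1}^d\frac{a_i}{\sigma_i},\qquad a_i\coloneq\E_{\vx\sim\dD}\norm{\nabla_\vx g_i(\vx)}^2.
\]
For the constraint, the displayed second-order utility equals $-\sum_i b_i\sigma_i$ with $b_i\coloneq\E_{\vx\sim\dD}g_i(\vx)^2$, so the budget $U_2\ge -C$ reads $\sum_i b_i\sigma_i\le C$. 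The standing hypothesis $\E_{\vx\sim\dD}g_i(\vx)^2>0$ guarantees every $b_i>0$.

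The core is then the scalar program: minimize $\sum_i a_i/\sigma_i$ over $\sigma_i>0$ subject to $\sum_i b_i\sigma_i\le C$. I would solve it with Cauchy--Schwarz, which delivers both the bound and its equality case at once; applied to the vectors with entries $\sqrt{a_i/\sigma_i}$ and $\sqrt{b_i\sigma_i}$,
\[
\Big(\sum_i\frac{a_i}{\sigma_i}\Big)\Big(\sum_i b_i\sigma_i\Big)\ge\Big(\sum_i\sqrt{a_ib_i}\Big)^2,
\]
with equality iff $b_i\sigma_i^2/a_i$ is constant in $i$, i.e. $\sigma_i\propto\sqrt{a_i/b_i}$. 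Because the objective is strictly decreasing in each $\sigma_i$, the budget must bind, so $\sum_i b_i\sigma_i=C$; substituting $\sigma_i=\lambda\sqrt{a_i/b_i}$ fixes $\lambda=C/\sum_j\sqrt{a_jb_j}$ and yields exactly the claimed form of $\emSigma_{i,i}$. (Equivalently, a KKT computation on the convex objective with the linear constraint gives the same stationarity condition $-a_i/\sigma_i^2+\mu b_i=0$.)

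The main obstacle is not the algebra but the two facts that make the Cauchy--Schwarz step tight. First, that the budget is active: this follows from strict monotonicity of $\sigma\mapsto\sum_i a_i/\sigma_i$, since any slack could be spent to strictly decrease the objective. Second, that the stationary point is the global minimizer: this holds because $\sigma\mapsto\sum_i a_i/\sigma_i$ is convex on the positive orthant and the feasible region is convex, so the KKT point is unique and optimal. The assumption $b_i>0$ is precisely what keeps $\sqrt{a_i/b_i}$ well-defined and forces each coordinate to consume budget; coordinates with $b_i=0$ (e.g. dead ReLU units) would permit unbounded $\sigma_i$ at no utility cost, which are the degenerate cases deferred to the appendix.
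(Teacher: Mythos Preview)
Your proposal is correct and follows essentially the same route as the paper: compute the Gaussian-channel Fisher information to obtain $\tr(\mJ_F(\vx))=\sum_i a_i/\sigma_i$, rewrite the utility constraint as $\sum_i b_i\sigma_i\le C$, and apply Cauchy--Schwarz to the pair $(\sqrt{a_i/\sigma_i},\sqrt{b_i\sigma_i})$ to identify $\sigma_i\propto\sqrt{a_i/b_i}$ as the unique optimizer. Your discussion of why the budget binds and why the KKT point is globally optimal is a bit more explicit than the paper's, but the argument is the same.
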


In the special case where $\dD$ is supported on a small neighborhood of $\vx$, corresponding to the attacker having an approximation of the data, we could approximate and simplify the locally optimal noise by using the value at $\vx$ to replace the expectations:
\begin{equation}
\label{eq:localnoise}
\emSigma_{i,i}(\vx) = \lambda(\vx)\frac{\norm{\nabla_\vx g_i(\vx)}}{|g_i(\vx)|}.
\end{equation}

\subsection{Optimal DP-SGD}

\paragraph{DP-SGD.} We extend our analysis in Section \ref{sec:optimalnoise} to optimize the noise in DP-SGD. DP-SGD differs from gradient noise by a gradient clipping step before adding noise. For a fixed clipping threshold $P$, let $g_P(\vx)$ represent the clipped gradients with elements:
\begin{equation}
g_P(\vx)_i=
\begin{cases}
g(\vx)_i, &\text{if } -P < g(\vx)_i < P, \\
P, &\text{if } g(\vx)_i \ge P, \\
-P, &\text{if } g(\vx)_i \le -P.
\end{cases}
\end{equation}

The generalized DP-SGD defense, $S_{\dpsgd,\mSigma,P}:\R^d\to\dP(\R^d)$, is as follows:
\begin{equation}
\label{def:dpsgd}
S_{\dpsgd,\mSigma,P}(\vx)=\mathcal{N}(g_P(\vx),\mSigma),
\end{equation}
where $\mSigma$ is the noise covariance. For $\mSigma = \epsilon \mI_d$, this reduces to standard DP-SGD. Since the first-order utility is constant, we find the $\mSigma$ that optimizes the second-order utility:
\begin{equation}
\label{eq:noiseutil}
U_2(S_{\dpsgd,\mSigma,P},\Theta)=-\sum_{i=1}^d \left(\frac{\partial L(\vx,\Theta)}{\partial \vx_i}\right)^2 \Sigma_{i,i}.
\end{equation}

\begin{theorem}[Optimal DP-SGD]
\label{thm:optimaldpsgd}
Under assumptions 1 to 5, and assuming $\E_{\vx\sim\dD} g_i(\vx)^2 > 0$ for all $i$, the optimal noise matrix $\mSigma$ that maximizes the reconstruction error lower bound (Eq. \ref{def:reconsbound}) for a utility budget of 
\[
U_2(S_{\dpsgd,\mSigma,P},\Theta) \ge -C
\]
has diagonal elements:
\begin{equation}
\Sigma_{i,i} = \lambda \sqrt{\frac{\E_{\vx\sim\dD} \norm{\nabla_\vx g_P(\vx)_i}^2}{\E_{\vx\sim\dD} g_P(\vx)_i^2}},
\end{equation}
where $\lambda$ is a constant, and $g(\vx) = \nabla_{\vx} L_\Theta(\vx)$ is the model gradient on the data $\vx$.
\end{theorem}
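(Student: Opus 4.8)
The plan is to mirror the proof of Theorem~\ref{thm:optimalnoise}, exploiting the fact that the DP-SGD mechanism $S_{\dpsgd,\mSigma,P}$ is exactly the Gaussian-noise mechanism with the clipped gradient $g_P$ substituted for $g$. First I would reduce the optimization: since the computable lower bound in Eq.~\ref{def:reconsbound} equals $d^2/\bigl(\E_{\vx}[\tr(\mJ_F(\vx))] + d\,\lambda_1(\mJ_P)\bigr)$ and neither $d$ nor $\mJ_P$ depends on the noise covariance $\mSigma$, maximizing this bound over $\mSigma$ is equivalent to minimizing $\E_{\vx\sim\dD}[\tr(\mJ_F(\vx))]$ subject to the utility budget. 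This turns the theorem into a finite-dimensional constrained optimization over the diagonal entries $\Sigma_{i,i}>0$.

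Next I would compute $\tr(\mJ_F(\vx))$ explicitly for the clipped-Gaussian defense. Writing $p_{S(\vx)}(\vy)=\mathcal{N}(\vy;g_P(\vx),\mSigma)$, the score is $\nabla_{\vx}\log p_{S(\vx)}(\vy)=J_P(\vx)^\top\mSigma^{-1}(\vy-g_P(\vx))$, where $J_P(\vx)=\nabla_{\vx}g_P(\vx)$ is the Jacobian of the clipped gradient. Forming the outer product and using $\E_{\vy\sim S(\vx)}\bigl[(\vy-g_P)(\vy-g_P)^\top\bigr]=\mSigma$ gives $\mJ_F(\vx)=J_P(\vx)^\top\mSigma^{-1}J_P(\vx)$, so by the cyclic property of the trace and the diagonality of $\mSigma$ we obtain $\tr(\mJ_F(\vx))=\sum_{i=1}^d\norm{\nabla_{\vx}g_P(\vx)_i}^2/\Sigma_{i,i}$, the clipped analogue of Eq.~\ref{eq:noisebound}. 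Taking the expectation over $\vx$ and pairing it with the second-order utility of Eq.~\ref{eq:noiseutil} (where the relevant first-order loss-change variance is measured against the clipped direction $g_P$ actually applied in the update), the problem becomes: minimize $\sum_i a_i/\Sigma_{i,i}$ subject to $\sum_i b_i\Sigma_{i,i}\le C$, with $a_i=\E_{\vx}\norm{\nabla_{\vx}g_P(\vx)_i}^2$ and $b_i=\E_{\vx}g_P(\vx)_i^2$.

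I would then solve this by Lagrange multipliers. The objective is convex and separable in the $\Sigma_{i,i}>0$ and the constraint is linear, so a stationary point of $\sum_i a_i/\Sigma_{i,i}+\mu\bigl(\sum_i b_i\Sigma_{i,i}-C\bigr)$ is the global optimum. Setting $\partial/\partial\Sigma_{i,i}=0$ gives $-a_i/\Sigma_{i,i}^2+\mu b_i=0$, i.e. $\Sigma_{i,i}=\mu^{-1/2}\sqrt{a_i/b_i}$; writing $\lambda=\mu^{-1/2}$ yields the claimed form. The hypothesis $\E_{\vx}g_i(\vx)^2>0$ guarantees $b_i>0$ — clipping preserves sign and vanishes only where $g_i$ does, so $g_P(\vx)_i\neq 0$ wherever $g_i(\vx)\neq 0$ — which keeps every denominator positive and the constraint nondegenerate.

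The main obstacle is the non-smoothness of the clip operator: $g_P$ fails to be differentiable on the set where some $|g_i(\vx)|=P$, so the Jacobian $J_P(\vx)$ used in the Fisher-information step needs justification. I would handle this by noting that $g_P$ is Lipschitz and differentiable almost everywhere, with $\nabla_{\vx}g_P(\vx)_i=\nabla_{\vx}g(\vx)_i\cdot\mathbf{1}[\,|g_i(\vx)|<P\,]$ off the clipping boundary; since $\dD$ has a density (Assumption~1), the boundary set $\{\vx:|g_i(\vx)|=P\}$ carries zero mass and does not affect the expectations $a_i,b_i$. I would also verify that the regularity conditions (Assumptions 1–5) survive clipping: the conditional density $p(\vy\mid\vx)$ remains smooth in $\vy$ because the Gaussian noise is unchanged, and the bounded, a.e.-differentiable map $\vx\mapsto g_P(\vx)$ preserves the integrability and boundary conditions that the Bayesian Cramér–Rao argument of Theorem~\ref{thm:lowerbound} requires.
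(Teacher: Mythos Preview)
Your proposal is correct and follows essentially the same strategy as the paper: reduce the DP-SGD mechanism to the Gaussian-noise case by substituting the clipped gradient $g_P$ for $g$ throughout, then solve the same separable constrained optimization. The paper's own proof is in fact a two-sentence reduction to Theorem~\ref{thm:optimalnoise}, where the minimization of $\sum_i a_i/\Sigma_{i,i}$ subject to $\sum_i b_i\Sigma_{i,i}\le C$ is handled via the Cauchy--Schwarz inequality rather than Lagrange multipliers; both routes give the identical optimum and equality condition $\Sigma_{i,i}\propto\sqrt{a_i/b_i}$. Your explicit treatment of the non-differentiability of the clip map on the measure-zero boundary $\{|g_i(\vx)|=P\}$, and your check that Assumptions~1--5 survive clipping, go beyond what the paper addresses.
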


In the special case where $\dD$ is supported on a small neighborhood of $\vx$, the locally optimal DP-SGD noise becomes:
\begin{equation}
\label{eq:localdpsgd}
\Sigma_{i,i}=
\begin{cases}
\lambda\sqrt{\frac{\E_{\vx\sim\dD} \norm{\nabla_\vx g_i(\vx)}^2}{\E_{\vx\sim\dD} g_i(\vx)^2}}, & \text{if } |g_i(\vx)| < P, \\
0, & \text{if } |g_i(\vx)| = P.
\end{cases}
\end{equation}
To summarize, we could optimize the privacy-utility trade-off for DP-SGD by changing the noise to zero for clipped gradients, and to our optimal noise for other gradients.
\subsection{Optimal Gradient Pruning}

\paragraph{Gradient Pruning.} Gradient pruning reduces the number of parameters in the shared gradient by zeroing out less significant gradients during training. Inspired by gradient compression \citep{lin2017deep,tsuzuku2018variance}, this approach prunes gradients with the smallest magnitude \citep{zhu2019deepleakagegradients}. It is also the most effective defense against DLG~\citep{zhu2019deepleakagegradients}.

For a given set of parameters $\sA$, the gradient pruning defense method $S_{\prune,\sA}:\R^d\to\dP(\R^d)$ is as follows:
\begin{equation}
\label{def:prune}
S_{\prune,\sA}(\vx)_i=
\begin{cases}
0, & \text{if } i \in \sA, \\
\vx_i, & \text{if } i \notin \sA.
\end{cases}
\end{equation}
\paragraph{Optimal Gradient Pruning} Under assumptions 1 to 5, the first-order utility of gradient pruning equals the sum of the squared unpruned gradients:
\begin{equation}
\label{eq:utilityprune}
U_1(S,\Theta) = \sum_{i \notin \sA} \left( \frac{\partial L(\vx,\Theta)}{\partial \vx_i} \right)^2.
\end{equation}

Since gradient pruning introduces no randomness, an accurate reconstruction is theoretically possible when the number of unpruned parameters exceeds the input dimension. To address this problem, we add a small noise to the unpruned gradients and analyze the noisy version of gradient pruning:
\begin{equation}
\label{def:noisyprune}
S_{\prune,\sA,\mSigma}(\vx)_i=
\begin{cases}
0, & \text{if } i \in \sA, \\
\mathcal{N}(\vx_i, \mSigma_{i,i}), & \text{if } i \notin \sA.
\end{cases}
\end{equation}
For $\mSigma = \epsilon \mSigma_0$, this collapses to the original pruning method when $\mSigma_0$ remains constant and $\epsilon\to 0$.

\begin{theorem}[Optimal Gradient Pruning]
\label{thm:optimalprune}
Under assumptions 1 to 5, and assuming $\E_{\vx \sim \dD} g_i(\vx)^2 > 0$ for all $i$, the optimal pruning distribution $\mathcal{R}$ for generating $\sA$, given the utility budget 
\[
\E_{\sA\sim\mathcal{R}}U_1(S_{\prune,\sA,\mSigma},\Theta) \geq C,
\]
prunes elements with the smallest value of:
\[
k_i = \frac{\E_{\vx \sim \dD} \norm{\nabla_\vx g_i(\vx)}^2}{\E_{\vx \sim \dD} g_i(\vx)^2},
\]
where $g(\vx) = \nabla_{\vx} L_\Theta(\vx)$ is the model gradient, and $g_i$ represents its $i$-th component.
\end{theorem}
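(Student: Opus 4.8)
The plan is to reduce the search over pruning distributions $\mathcal{R}$ to a finite-dimensional linear program in the marginal keep-probabilities $q_i\coloneq\Pr_{\sA\sim\mathcal{R}}[i\notin\sA]$, and then solve that program explicitly by a fractional-knapsack argument. First I would invoke the mixed-defense bound of Eq.~\ref{def:mixedlowerbound}: since the Jensen step there already converts $\E_{\sA\sim\mathcal{R}}B_{\dD,Q(\cdot,\sA)}$ into $d^2/\bigl(\E_{\sA\sim\mathcal{R}}\E_{\vx\sim\dD}\tr(\mJ_F)+d\,\lambda_1(\mJ_P)\bigr)$, maximizing the reconstruction bound over $\mathcal{R}$ (for fixed $d$ and $\mJ_P$) is equivalent to minimizing the privacy functional $\E_{\sA\sim\mathcal{R}}\E_{\vx}\tr(\mJ_F)$. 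The key observation is that both this functional and the utility $\E_{\sA\sim\mathcal{R}}U_1(S_{\prune,\sA,\mSigma},\Theta)$ are linear in the indicators $\1[i\notin\sA]$, so by linearity of expectation they depend on $\mathcal{R}$ only through the marginals $q_i$. A pruned coordinate is observed as the constant $0$ and hence contributes nothing to either $g(\vx)\cdot\vy$ or to the Fisher information, while a kept coordinate contributes $\E_\vx g_i(\vx)^2$ to the utility (Eq.~\ref{eq:utilityprune}) and $\E_\vx\norm{\nabla_\vx g_i(\vx)}^2/\mSigma_{i,i}$ to the Fisher trace (Eq.~\ref{eq:noisebound}). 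Writing $b_i\coloneq\E_\vx g_i(\vx)^2$ and $a_i\coloneq\E_\vx\norm{\nabla_\vx g_i(\vx)}^2/\mSigma_{i,i}$, the utility becomes $\sum_i q_i b_i$ and the Fisher trace becomes $\sum_i q_i a_i$.

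The problem is now the linear program: minimize $\sum_i q_i a_i$ subject to $\sum_i q_i b_i\ge C$ and $q_i\in[0,1]$. I would solve it through its KKT conditions (equivalently a greedy exchange argument). Introducing a multiplier $\mu\ge 0$ for the utility constraint, stationarity forces $q_i=1$ when $a_i<\mu b_i$ and $q_i=0$ when $a_i>\mu b_i$, so the optimizer is a threshold rule on the per-coordinate ratio $a_i/b_i=k_i/\mSigma_{i,i}$, where $k_i=\E_\vx\norm{\nabla_\vx g_i}^2/\E_\vx g_i^2$ is exactly the quantity named in the theorem. Under the common noise scale applied to the unpruned coordinates this ratio is a monotone function of $k_i$ alone, so ranking coordinates by $k_i$ reproduces the ranking by $a_i/b_i$, and the optimal $\mathcal{R}$ prunes coordinates in this order until the budget $C$ binds. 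The multiplier $\mu$ is fixed by $\sum_i q_i b_i=C$; at most one coordinate sits exactly at the threshold and receives a fractional $q_i$, which is realized by letting $\mathcal{R}$ randomize over whether that single coordinate is pruned.

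The main obstacle I expect is the Fisher-information bookkeeping that underlies the linearization together with the regularity needed to apply Theorem~\ref{thm:lowerbound} to the noisy-pruning mechanism. Specifically, I must verify that $S_{\prune,\sA,\mSigma}$ satisfies Assumptions 1 to 5 for each realized $\sA$, and handle the degenerate (deterministically zero) pruned coordinates by restricting the score to the kept block, so that these coordinates contribute a zero block to $\mJ_F$ and the trace decomposes coordinatewise as claimed; this is precisely where the diagonal, coordinatewise-independent structure of the noise is used. A secondary and more routine obstacle is the passage between distributions $\mathcal{R}$ and marginals: I need that the objective and constraint depend only on $q$ (immediate from linearity) and that every target $q\in[0,1]^d$ is realizable by some distribution over subsets (take independent Bernoulli$(q_i)$), so that optimizing over $q$ directly loses no generality and the knapsack optimum can be attained, with randomization ever required only on the lone boundary coordinate.
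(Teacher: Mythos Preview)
Your proposal is correct and follows essentially the same route as the paper: both reduce to the marginal keep-probabilities $q_i=\Pr_{\sA\sim\mathcal{R}}[i\notin\sA]$, express the expected utility and the expected Fisher trace as the linear forms $\sum_i q_i b_i$ and $\sum_i q_i a_i$ (with $\mSigma=\epsilon\mI_d$), and characterize the optimum as a threshold rule on the ratio $a_i/b_i=k_i$. The only cosmetic difference is that you solve the resulting linear program via KKT/fractional-knapsack, whereas the paper uses a direct exchange argument (if the budget is slack or the threshold ordering is violated, perturb $\mathcal{R}$ to strictly improve).
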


\begin{remark}
When a deterministic set does not match the utility budget, parameters on the borderline are pruned with positive probability. When this happens, the optimal defense is a mixed defense.
\end{remark}

Similar to previous sections, we derive locally optimal gradient pruning, which prunes parameters $i$ with the smallest value $k_i'$:
\begin{equation}
\label{eq:localprune}
k'_i = \frac{\norm{\nabla_\vx g_i(\vx)}}{g_i(\vx)}.
\end{equation}
An additional feature of the locally optimal version is that it is also the optimal defense when using optimal noise instead of identical noise in Theorem \ref{thm:optimalnoise}.
\subsection{Algorithm design}
\label{sec:methodology}

Because of the high computational cost of the expectation terms in the globally optimal defense methods (Theorems \ref{thm:optimalnoise} to \ref{thm:optimalprune}), our implementations are based on the locally optimal versions (Eqs. \ref{eq:localnoise}, \ref{eq:localdpsgd} and \ref{eq:localprune}).

When computing optimal defense parameters, calculating the Jacobian matrix of model gradients on input data is especially challenging. The full Jacobian matrix for an image with a resolution of $32 \times 32$ would require roughly 3000 times the memory of the model itself, which is prohibitively large. We resolve this problem by using the forward differentiation method to save computational cost and use approximation to save memory cost. 

The forward method \citep{10.5555/1455489} tracks gradients based on the input tensor size rather than the output tensor size, and therefore more efficient since we are dealing with low input and high output dimensions. We approximate the $l_2$-norm of the gradients using Lemma \ref{lem:randomsketching}:

\begin{lemma}
\label{lem:randomsketching}
Given a differentiable function $f:\R^d\to\R$ and a constant $\epsilon>0$. For any number of samples $k\in \mathbb{Z}$ and random vectors $\vx_1,\dots,\vx_k$ sampled from $\mathcal{N}(0,\mI_n)$, we have that
\begin{equation}
\label{eq:l2approx}
\left|\norm{\nabla_\vx f(\vx)}^2-\frac{1}{k} \sum_{j=1}^k \norm{\frac{\partial f_i(\vx + \alpha \vx_j)}{\partial\alpha}}^2_{\alpha=0}\right|\le \epsilon \norm{\nabla_\vx f(\vx)}^2
\end{equation}
with probability at least $1-\frac{2}{k\epsilon^2}$ for any $\vx\in\R^d$.
\end{lemma}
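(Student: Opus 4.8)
The plan is to rewrite each summand as a squared Gaussian inner product, recognize the whole average as an unbiased, i.i.d.\ estimator of $\norm{\nabla_\vx f(\vx)}^2$, and then apply Chebyshev's inequality. The precise failure probability $2/(k\epsilon^2)$ is the signature of a second-moment bound, so Chebyshev should land the constant exactly rather than a sub-Gaussian tail estimate.

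First I would fix $\vx$, write $\vg \coloneq \nabla_\vx f(\vx)$, and compute the directional derivative by the chain rule:
\[
\frac{\partial f(\vx+\alpha\vx_j)}{\partial\alpha}\Big|_{\alpha=0} = \langle \vg, \vx_j\rangle.
\]
(Since $f$ is scalar-valued, the outer $\norm{\cdot}^2$ on the directional derivative is simply a square, and the stray subscript $f_i$ is read as $f$.) Because $\vx_j\sim\mathcal{N}(\vzero,\mI)$, the inner product $\langle\vg,\vx_j\rangle$ is a one-dimensional Gaussian with mean $0$ and variance $\norm{\vg}^2$. Writing $Z_j \coloneq \langle\vg,\vx_j\rangle^2$, the $Z_j$ are i.i.d.\ and each is distributed as $\norm{\vg}^2$ times a $\chi^2_1$ variable.

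Next I would record the two moments I need. Unbiasedness is immediate: $\E[Z_j]=\norm{\vg}^2$, so the average $\bar Z \coloneq \tfrac1k\sum_{j=1}^k Z_j$ satisfies $\E[\bar Z]=\norm{\vg}^2$. For the variance, using $\Var(\chi^2_1)=2$ (equivalently the Gaussian fourth-moment identity $\E[\langle\vg,\vx_j\rangle^4]=3\norm{\vg}^4$) gives $\Var(Z_j)=2\norm{\vg}^4$, and hence $\Var(\bar Z)=2\norm{\vg}^4/k$ by independence.

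Finally I would apply Chebyshev's inequality to $\bar Z$ with deviation $t=\epsilon\norm{\vg}^2$:
\[
\Pr\!\left(|\bar Z-\norm{\vg}^2|\ge \epsilon\norm{\vg}^2\right)\le \frac{\Var(\bar Z)}{\epsilon^2\norm{\vg}^4}=\frac{2}{k\epsilon^2},
\]
which is exactly the complement of the claimed event; taking complements yields the stated probability $1-2/(k\epsilon^2)$, uniformly in $\vx$ since the bound depends on $\vx$ only through $\norm{\vg}$, which cancels. There is no genuine obstacle here: the only point requiring care is the variance constant, where one must use the correct fourth moment of a Gaussian rather than a crude tail bound, as it is precisely this factor $2$ that produces the sharp $2/(k\epsilon^2)$. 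A minor case to flag is $\norm{\vg}=0$, where the inequality holds trivially with both sides equal to zero.
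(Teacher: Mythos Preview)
Your proposal is correct and follows essentially the same argument as the paper: compute the directional derivative via the chain rule to obtain a mean-zero Gaussian with variance $\norm{\nabla_\vx f(\vx)}^2$, recognize the sum of squares as a scaled $\chi^2(k)$ variable, and apply Chebyshev's inequality using $\Var(\chi^2(k))=2k$ to get the $2/(k\epsilon^2)$ bound. The paper's write-up phrases things in terms of the normalized sum $A\sim\chi^2(k)$ rather than your average $\bar Z$, and it (loosely) calls the final step ``Markov's inequality,'' but the substance is identical.
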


This allows us to approximate the $l_2$-norms without the entire Jacobian matrix, significantly reducing computational and memory cost. Our resulting algorithm is outlined in Algorithm \ref{alg:approxnoiseprune}.

\begin{algorithm}[h]
\caption{Approximate Locally Optimal Gradient Noise}
\label{alg:approxnoiseprune}
\begin{algorithmic}[1]
\STATE \textbf{Input:} Model parameters $\Theta \in \mathbb{R}^{d}$, loss function $L(\vx,\Theta)$, number of samples $k$, noise scale $\lambda$, small constant $c$
\STATE \textbf{Output:} Defended model gradients $S(g(\vx))$
\STATE \textbf{Step 1:} Compute the model gradients $g(\vx) = \nabla_{\Theta}L(\vx,\Theta)$
\STATE \textbf{Step 2:} Sample $k$ random vectors $\vx_1, \dots, \vx_k \sim \mathcal{N}(0, \mI_{n})$
\STATE \textbf{Step 3:} Approximate $\norm{\nabla_\vx g_i(\vx)}^2$ with 
$\frac{1}{k} \sum_{j=1}^k \norm{\frac{\partial g_i(\vx + \alpha \vx_j)}{\partial\alpha}}^2_{\alpha=0}.$
\STATE \textbf{Step 4:} Compute the optimal noise matrix:
\begin{equation}
\emSigma_{i,i} = \lambda \frac{\norm{\nabla_\vx g_i(\vx)}^2}{\max(|g_i(\vx)|, c)}
\end{equation}
\STATE \textbf{Step 5:} Sample $\vepsilon \sim \mathcal{N}(0,\emSigma)$ and return the defended gradients $g(\vx) + \vepsilon$
\end{algorithmic}
\end{algorithm}

The optimal pruning algorithm follows the same steps, with differences in Steps 4 and 5, where we calculate the index for pruning (Eq. \ref{eq:localprune}) instead of the noise scale.
\section{Experiments}
\label{sec:experiments}
We compare our proposed algorithms with existing defense methods on two datasets: MNIST \citep{MNIST} and CIFAR-10 \citep{krizhevsky2009learning}. As our algorithm employs different defenses on different parameters, we use an attack that treats parameters equally. One attack with such property is the Inverting Gradients attack \citep{geiping2020invertinggradientseasy}, a powerful attack that does not require extra information or specific model architecture. 

\subsection{MNIST}
\vspace{-3pt}
The MNIST dataset consists of $28 \times 28$ grayscale images of handwritten digits, serving as a simple test for our algorithm.

\subsubsection{Gradient Pruning}
\vspace{-3pt}
We apply different pruning thresholds to a randomly initialized Convolutional Neural Network \citep{4082265}, using 4 batches of 16 images to compute gradients. These gradients were defended using gradient pruning and our optimal gradient pruning, followed by an Inverting Gradients attack. Figure \ref{fig:mnistprunek} shows that our method consistently achieves higher Mean Squared Error (MSE) and lower Peak Signal-to-Noise Ratio (PSNR) at commonly used high pruning ratios, indicating stronger defenses.
\begin{figure}
%Figure 1
    \centering  \includegraphics[width=\linewidth]{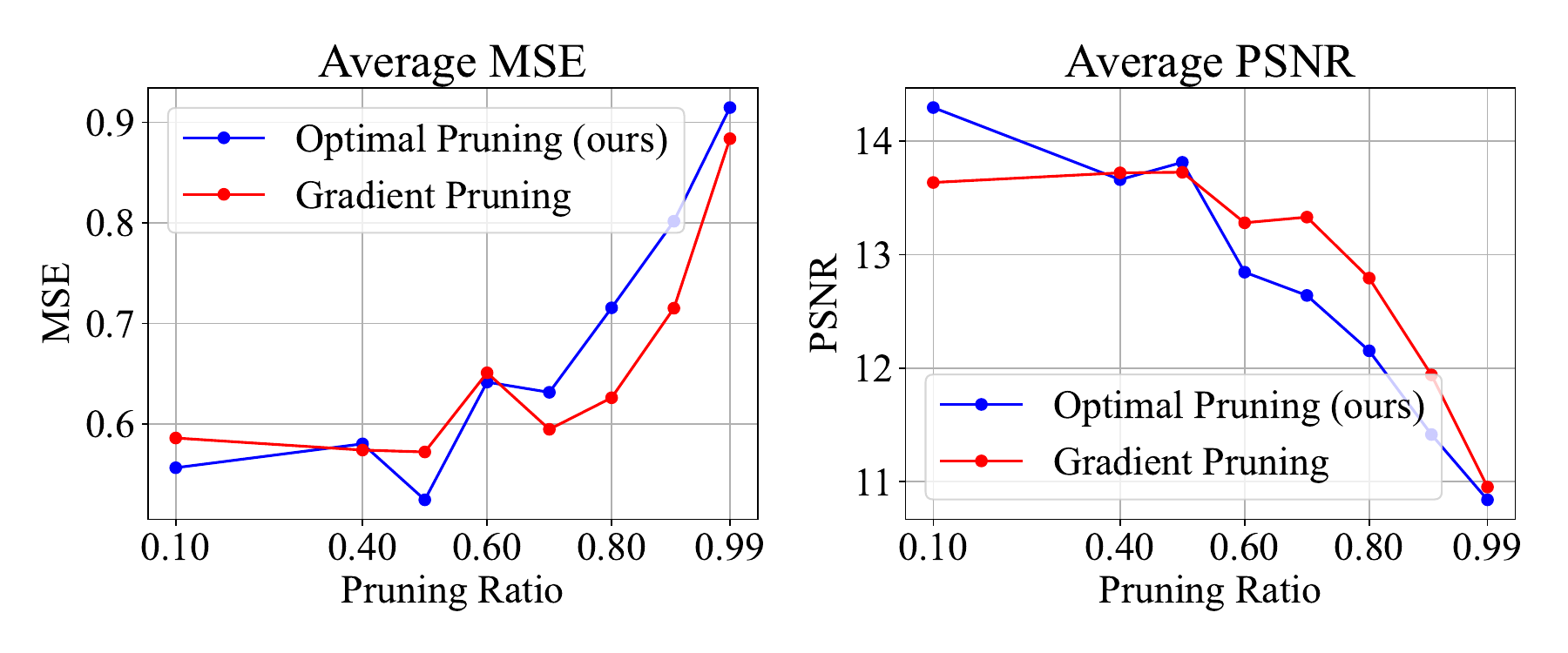}
    \vspace{-20pt}
    \caption{Average reconstruction indexes based on Gradient Inversion with batch size 16.}
    \label{fig:mnistprunek}
\end{figure}

\begin{figure}
%Figure 2
    \centering
    \includegraphics[width=1\linewidth]{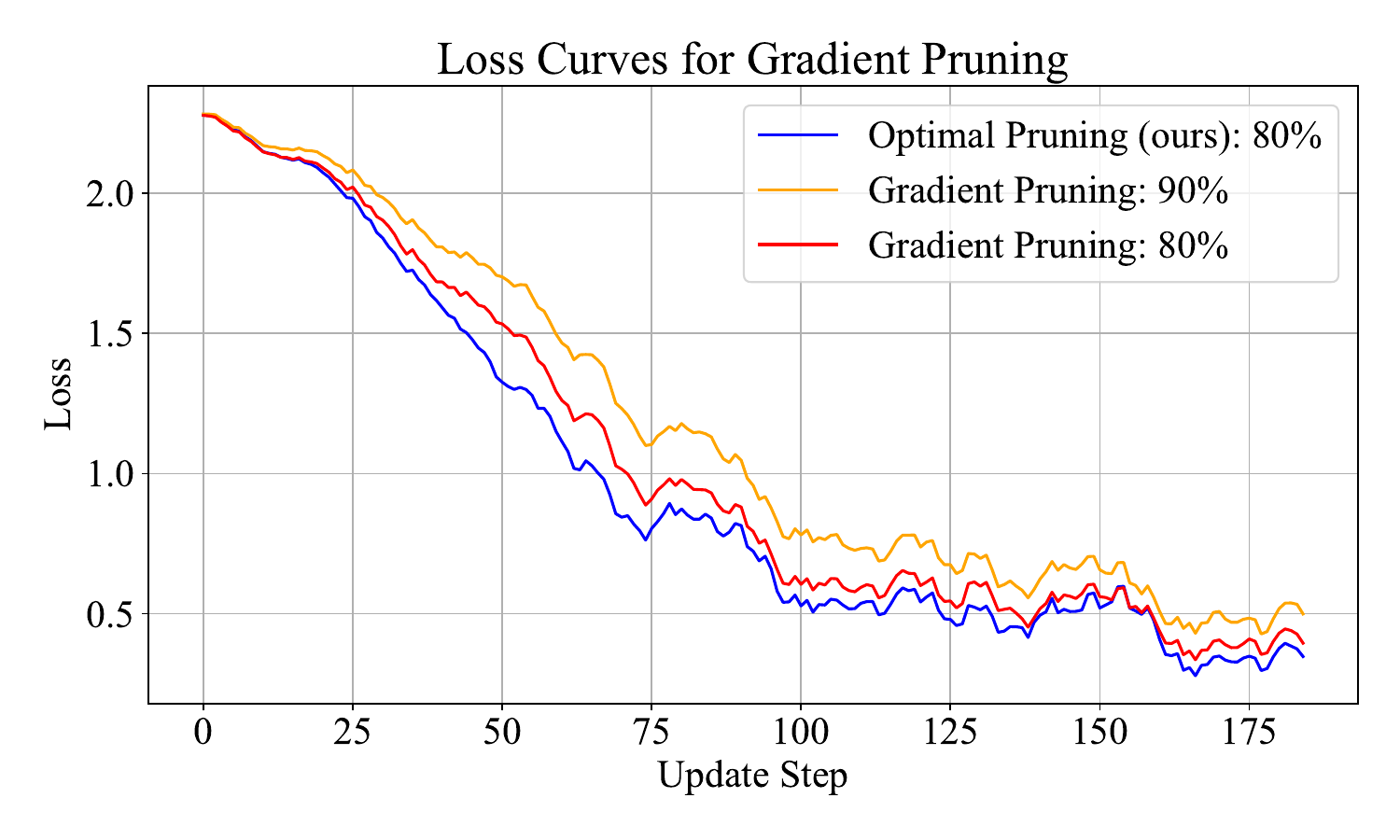}
    \vspace{-20pt}
    \caption{Training curves of CNN on MNIST with 80\% \& 90\% gradient pruning and 80\% optimal pruning (smoothed with window size 8). 80\% optimal pruning outperforms 90\% gradient pruning in training.}
    \label{fig:prunecurves}
\end{figure}

\begin{figure}
%Figure 5
    \centering
    \includegraphics[width=1\linewidth]{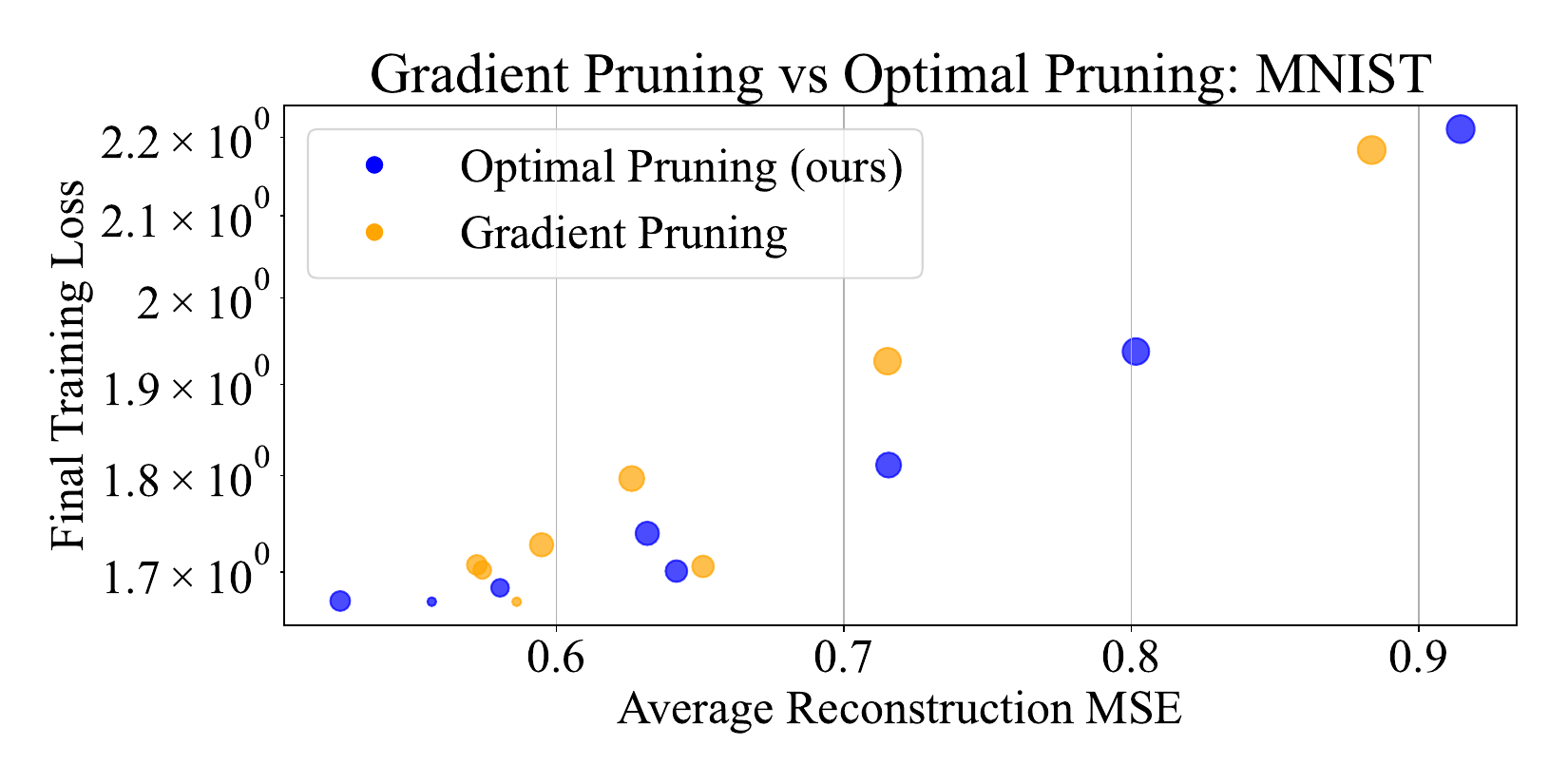}
    \vspace{-25pt}
    \caption{Scatter plot of gradient pruning and our optimal pruning on MNIST. X-axis: average reconstruction MSE. Y axis: Training loss on 64 samples. Size of points: Pruning ratio.}
    \label{fig:mnistscattermse}
\end{figure}

To assess training utility, we trained the models under a federated learning setting. Figure \ref{fig:prunecurves} shows that 80\% optimal pruning outperforms 90\% gradient pruning in training speed, while we showed that they have similar privacy in Figure \ref{fig:mnistprunek}. The scatter plot in Figure \ref{fig:mnistscattermse} shows the privacy-utility trade-off for a wider range of pruning ratios, indicating the superior privacy-utility trade-off of our method.

\vspace{-3pt}
\subsubsection{DP-SGD}
\vspace{-3pt}
\begin{figure}
%Figure 3
    \centering    \includegraphics[width=1\linewidth]{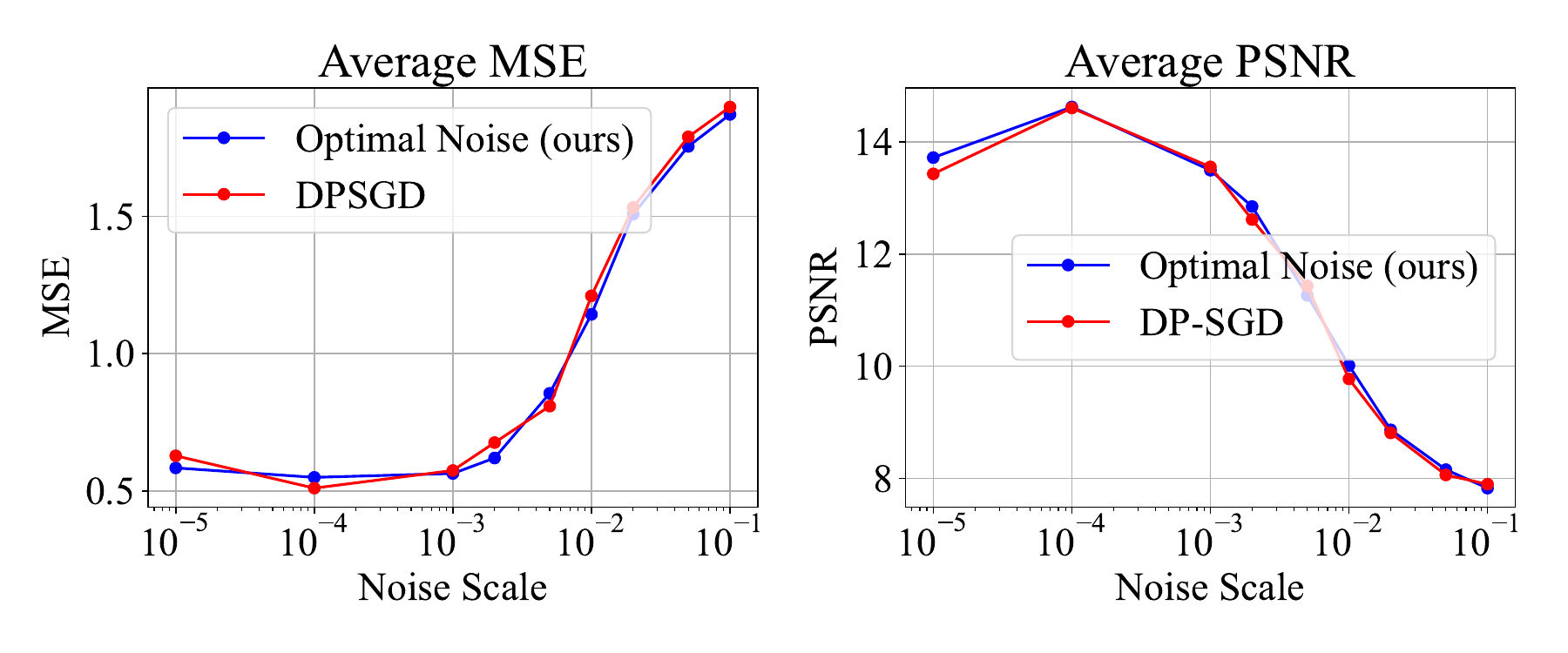}
    \vspace{-20pt}
    \caption{Average reconstruction indexes based on Gradient Inversion for DP-SGD. The noise scale equals the Frobenius norm of the covariance matrix. }
    \label{fig:mnistnoise}
\end{figure}

\begin{figure}
%Figure 4
    \centering
    \includegraphics[width=1\linewidth]{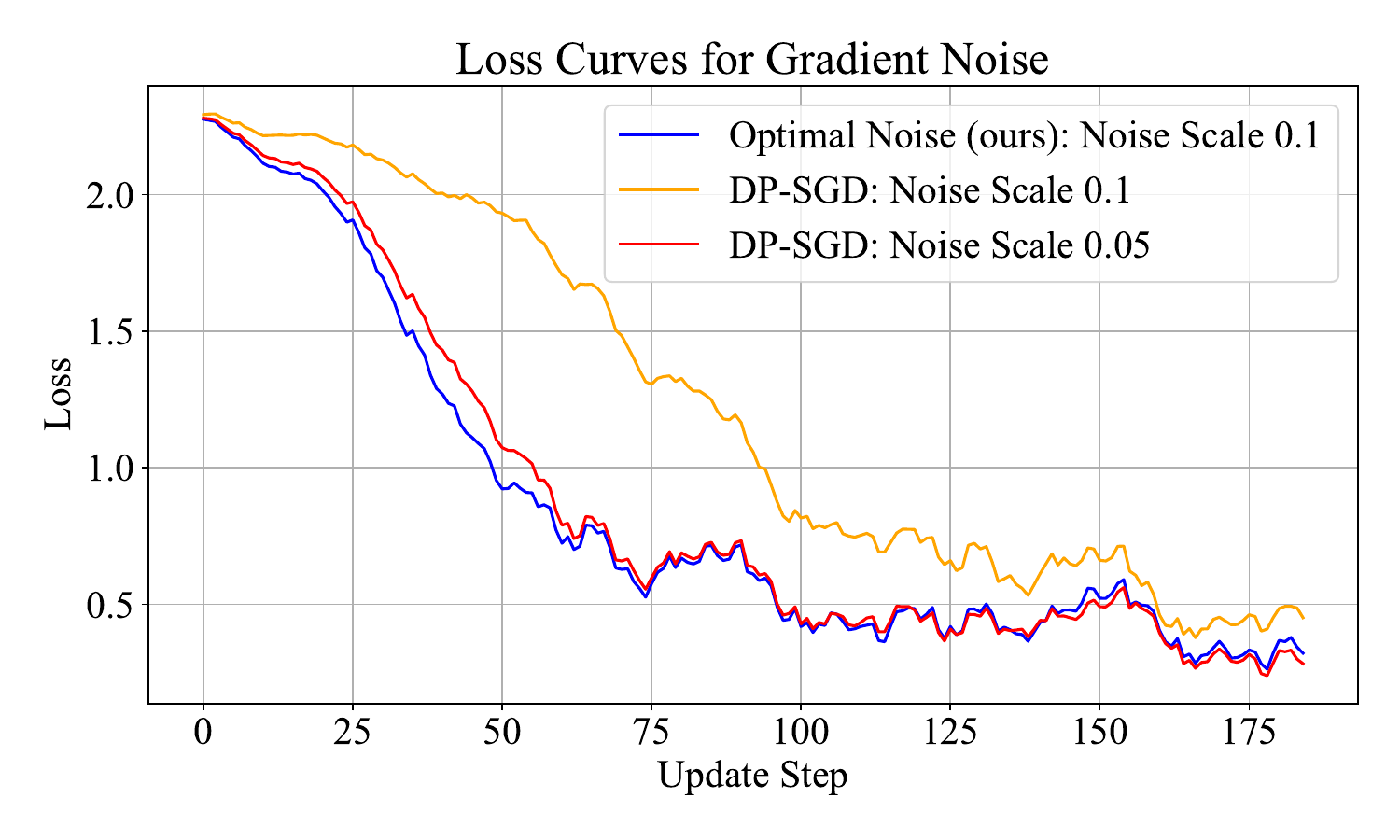}
    \vspace{-20pt}
    \caption{Training curves of CNN on the MNIST dataset.  (Smoothed with window size 8) Optimal noise with a scale of 0.1 outperforms DP-SGD with a scale of 0.1.}
    \label{fig:mnistnoisecurve}
\end{figure}

\begin{figure}
%Figure 6
    \centering
    \includegraphics[width=\linewidth]{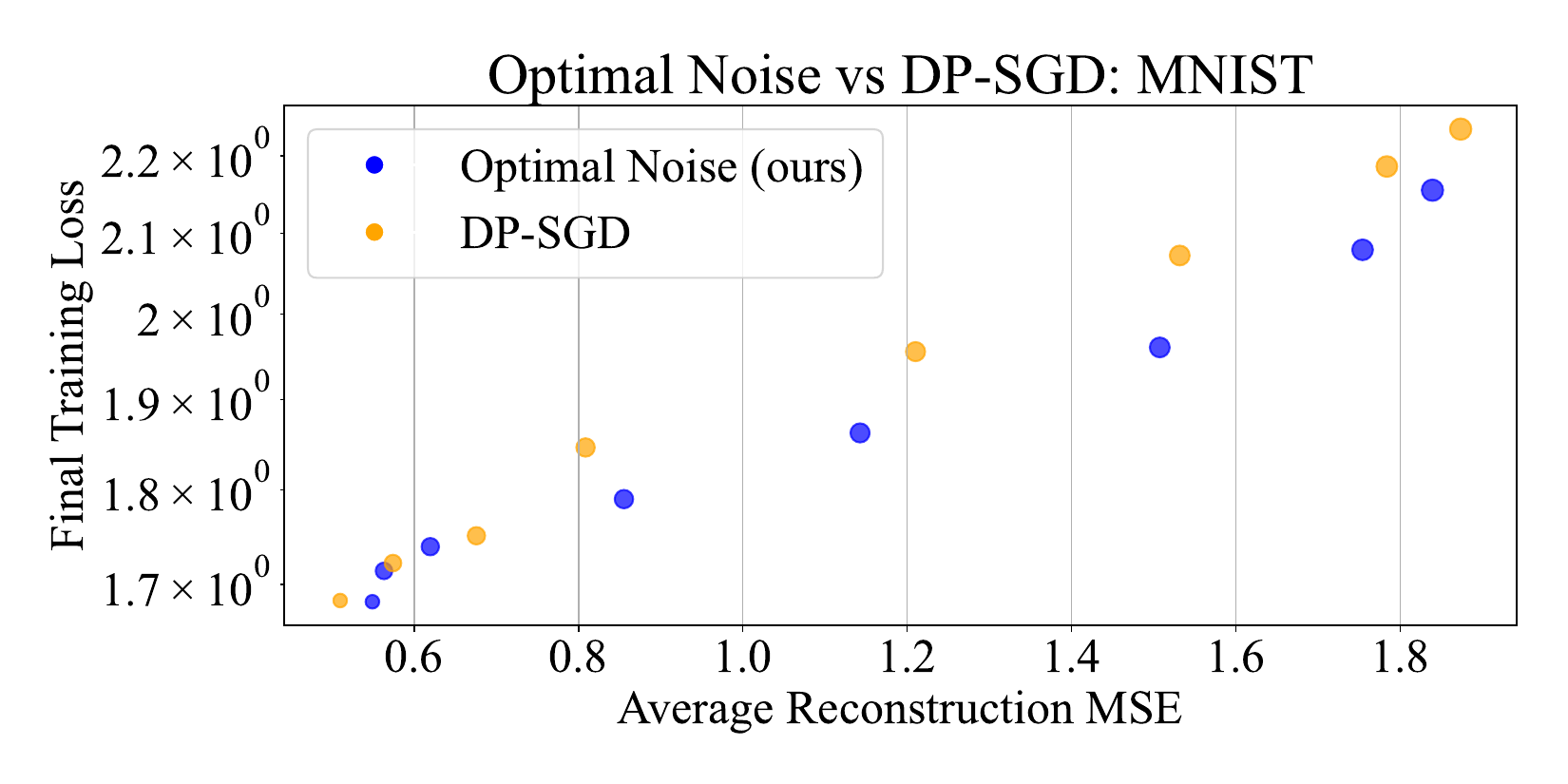}
    \vspace{-20pt}
    \caption{Comparison of optimal noise and DP-SGD on MNIST. X-axis: average reconstruction MSE. Y axis: Training loss on 64 samples.}
    \label{fig:mnistscatterdpsgdmse}
\end{figure}

\begin{figure}
    \centering
    \includegraphics[width=0.6\linewidth]{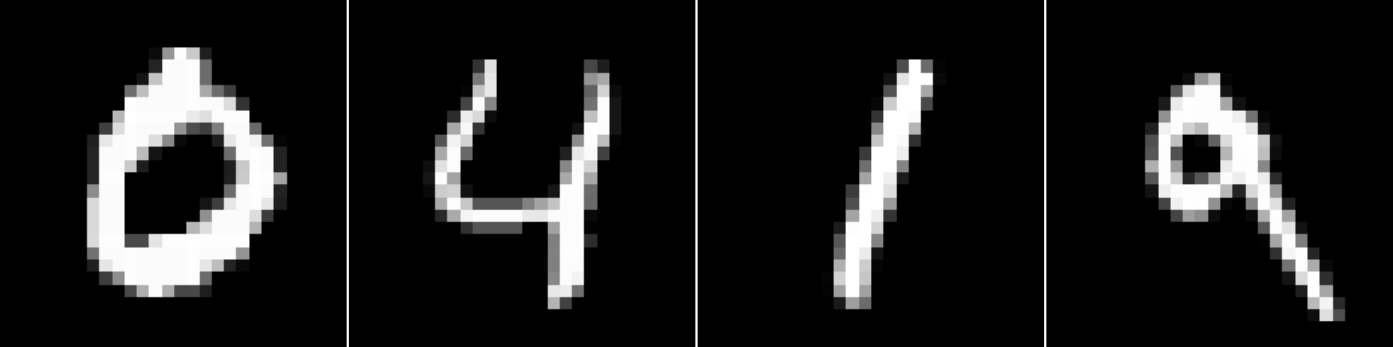}
    \includegraphics[width=0.6\linewidth]{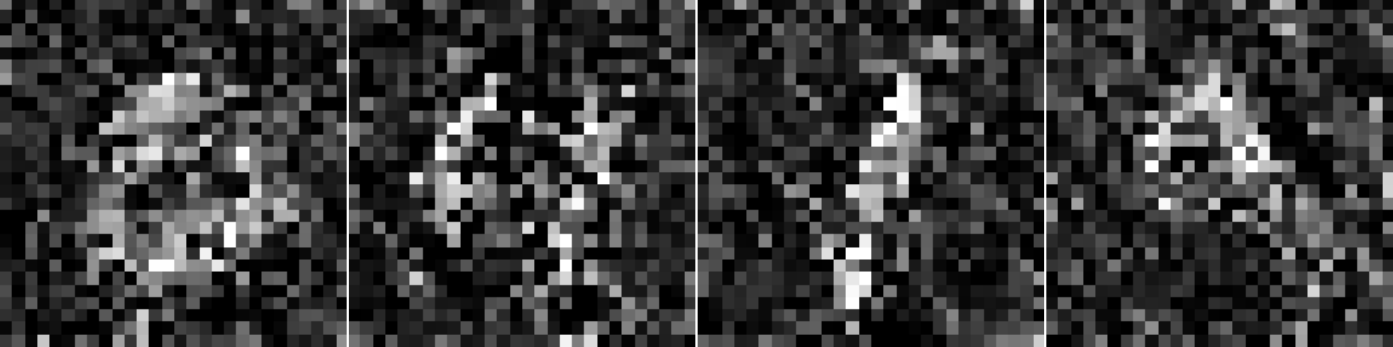}
    \includegraphics[width=0.6\linewidth]{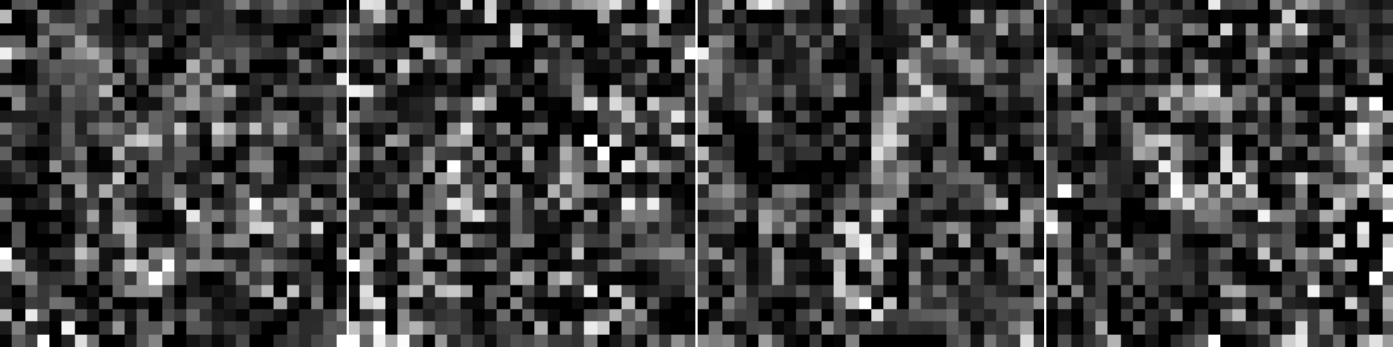}
    \caption{Reconstruction from the MNIST dataset with batch size 4. First row: ground truth. 
    Second row: DP-SGD with scale 0.05. Third row: optimal noise with scale 0.1. Our method has better privacy when the performance in training is similar.}
    \label{fig:visualizenoisemnist}
\end{figure}
We also evaluated our defense on DP-SGD. As shown in Figure \ref{fig:mnistnoise}, our optimal noise achieves comparable performance to DP-SGD in terms of defense at the same noise scale, while our algorithm has faster learning speed (Figure \ref{fig:mnistnoisecurve}). The scatter plot in Figure \ref{fig:mnistscatterdpsgdmse} further demonstrates the improved privacy-utility trade-off of our approach. Visualization of the reconstruction in Figure \ref{fig:visualizenoisemnist} shows better protection against attacks using our optimal noise for the same level of training utility.

\subsection{CIFAR-10}
We extend our experiments to the CIFAR-10 dataset, consisting of colored images with size $32\times 32$. We used a larger model with 2.9M parameters.

\subsubsection{Optimal Pruning}

\begin{figure}[htb]
%Figure 7&8
    \centering
    \includegraphics[width=\linewidth]{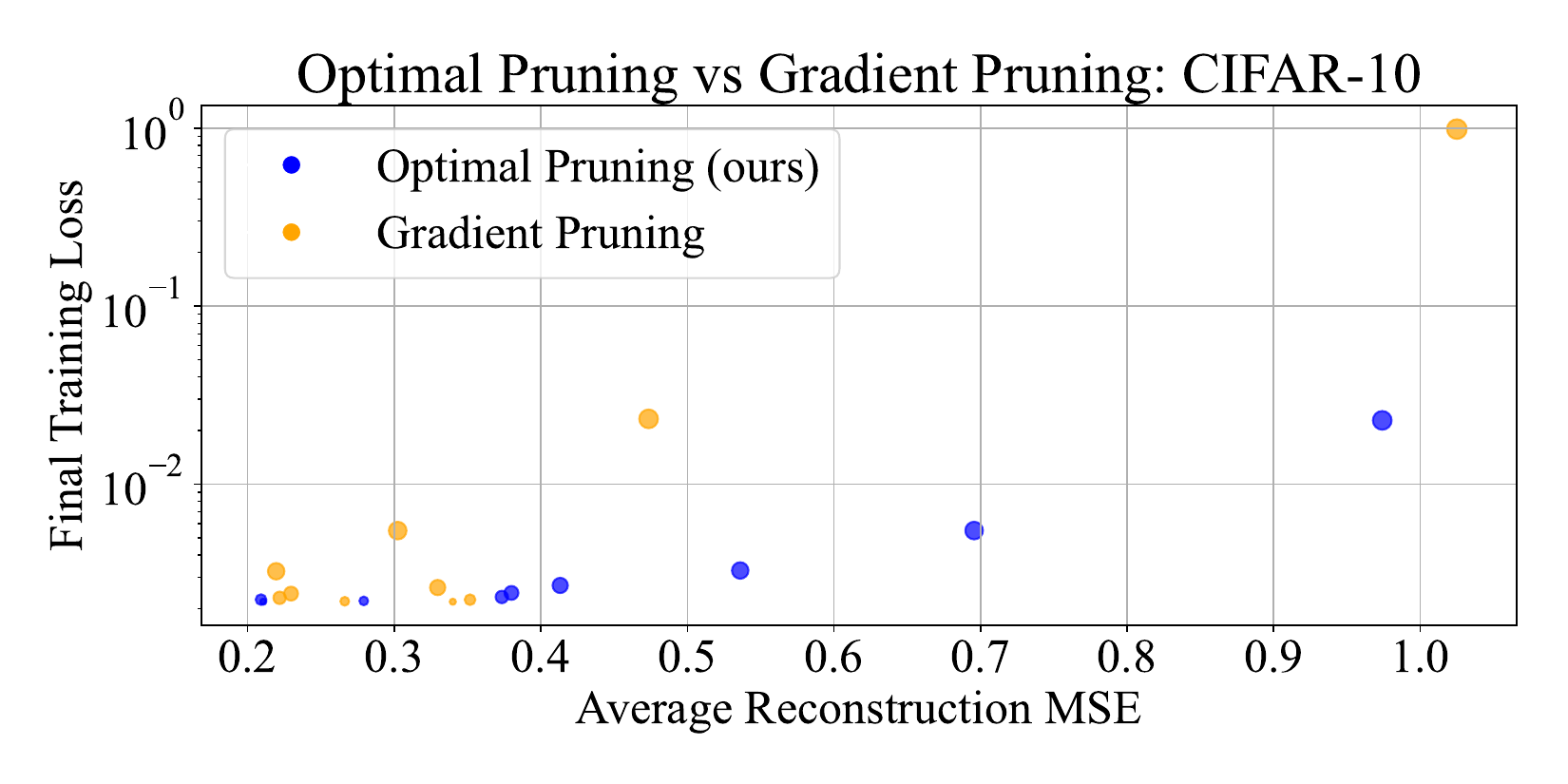}
    \caption{Comparison of optimal pruning and gradient pruning on CIFAR-10. X-axis: average MSE. Y axis: Training loss on 8 samples.}
    \label{fig:cifarprune}
\end{figure}
\begin{figure}[htb]
    \centering
    \includegraphics[width=0.6\linewidth]{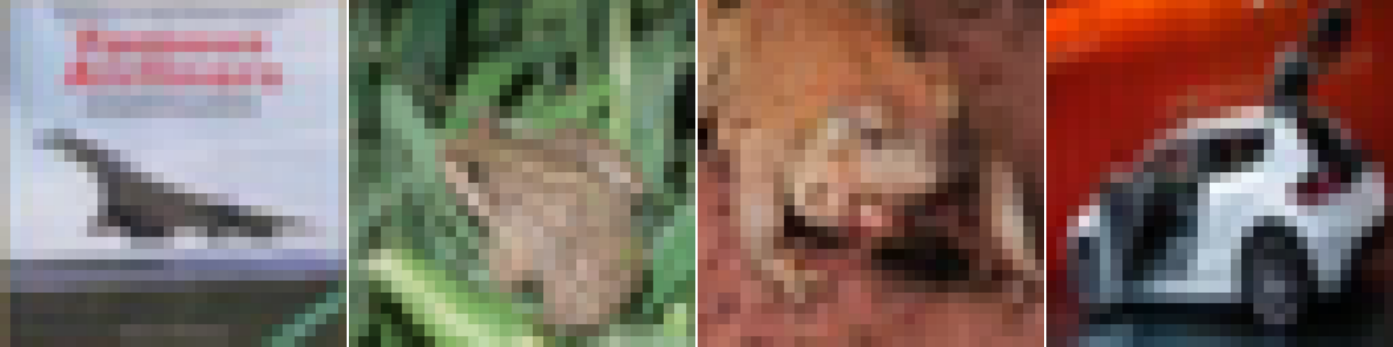}
    \includegraphics[width=0.6\linewidth]{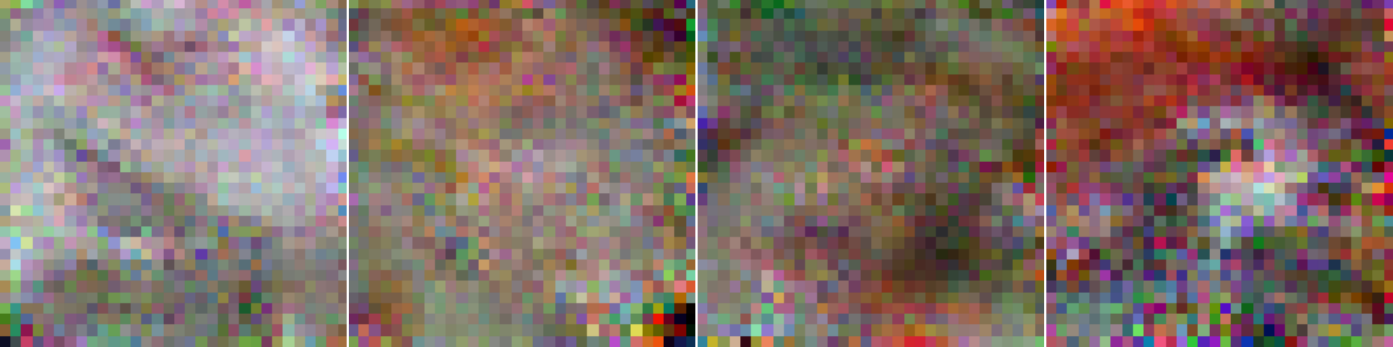}
    \includegraphics[width=0.6\linewidth]{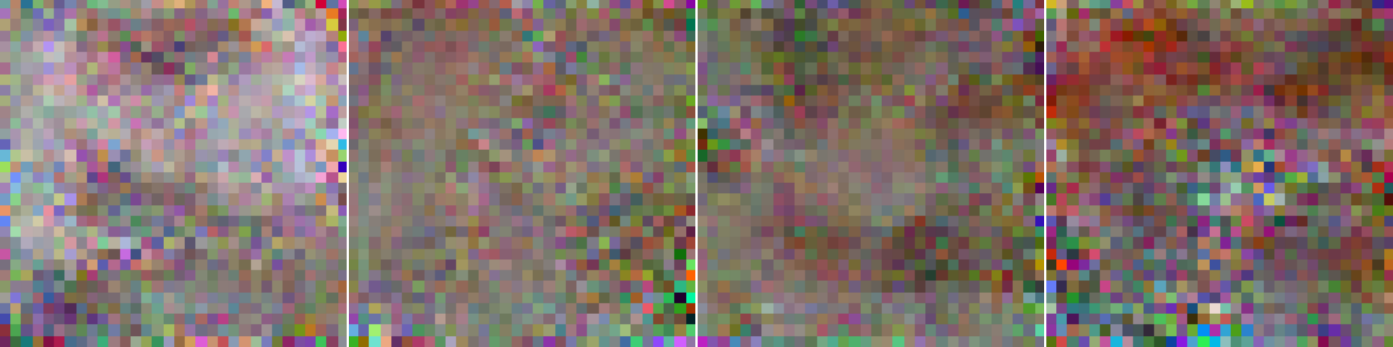}
    \caption{Reconstruction from the CIFAR-10 dataset with batch size 4. First row: ground truth. Second row: 90\% gradient pruning. Third row: 80\% optimal pruning. 80\% optimal pruning has better training utility and better protection against reconstruction.}
    \label{fig:visualizeprunecifar}
\end{figure}

As shown in Figure \ref{fig:cifarprune}, our optimal pruning achieved higher reconstruction error than gradient pruning for the same level of training utility, with a pruning ratio of 70\% outperforming 90\% pruning in gradient pruning. Visual comparison (Figure \ref{fig:visualizeprunecifar}) also indicates better protection using our method.

\begin{figure}[h]
%Figure 7&8
    \centering
    \includegraphics[width=\linewidth]{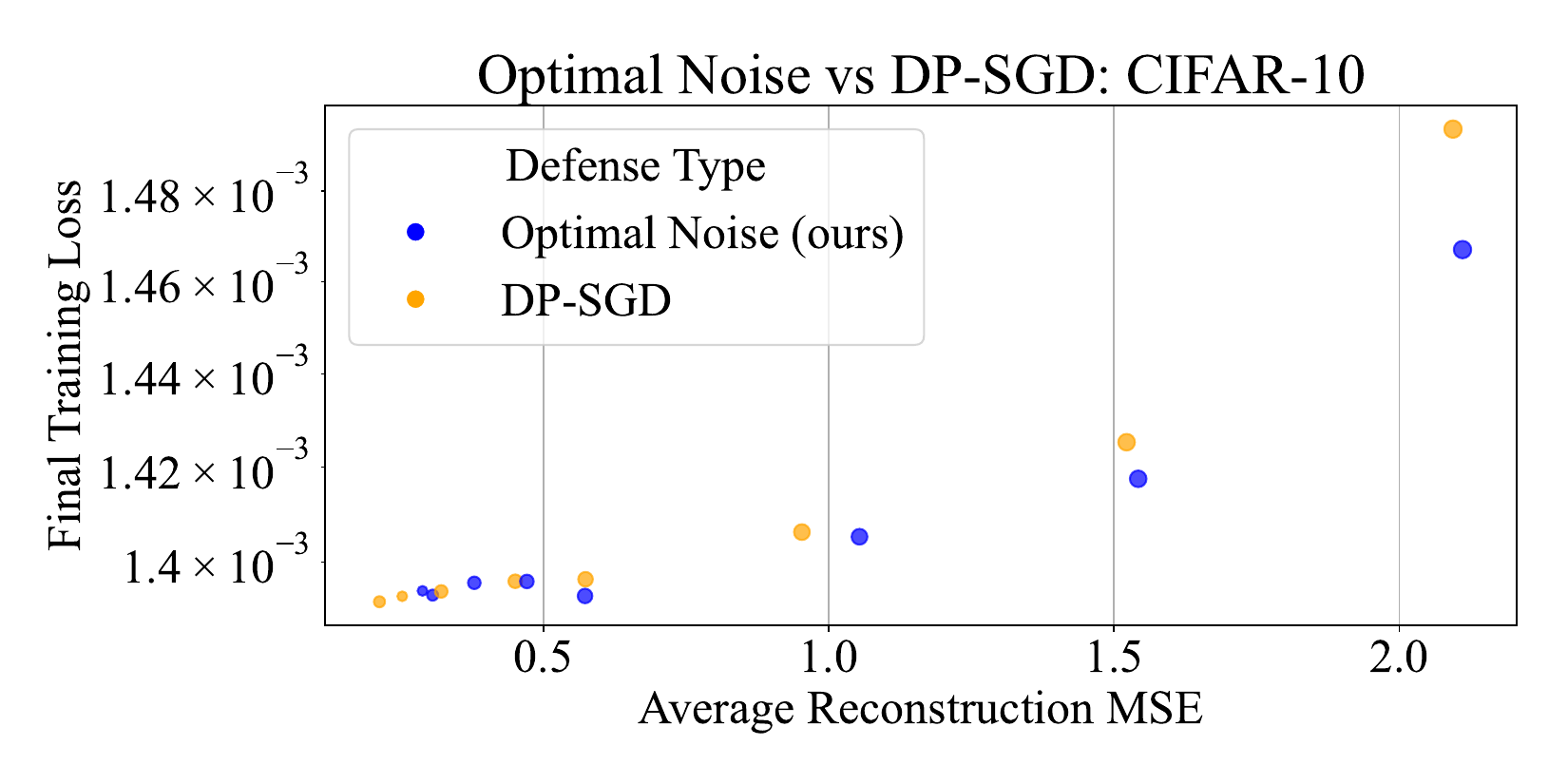}
    \caption{Comparison of optimal noise and DP-SGD on CIFAR-10. X-axis: average MSE. Y axis: Training loss on 8 samples.}
    \label{fig:cifarprunedpsgd}
\end{figure}

\subsubsection{Optimal Noise}
Reconstructing CIFAR-10 images is more challenging, with failures occurring at noise scales larger than $10^{-3}$. However, our noise method still offers a better privacy-utility trade-off, as shown in Figure \ref{fig:cifarprunedpsgd}. The advantage in training utility is more significant when the noise scale is larger. (Details in Appendix \ref{sec:additionalexperiments}.)

\clearpage
\newpage

\section{Discussion}
In this work, we derived a theoretical reconstruction lower bound and used it to formulate optimal defense methods as improvements of gradient noise and gradient pruning. Our experimental results on MNIST and CIFAR-10 demonstrate the effectiveness of our approach.

As we are only presenting the possibility of a better privacy-utility tradeoff, a key limitation of our methods is the high computational cost of our algorithm. This could be mitigated through simplifications (e.g. layer-wise defense) or lowering the frequency of updating defense parameters. Additionally, the reconstruction bound used in our analysis is not tight. The utilization of more precise bounds or indices that integrate current attack methods remains an open challenge for further research.

Furthermore, our analysis could potentially be applied to other defense methods or designed against other types of attacks. This also remains an open challenge.

\subsection*{Acknowledgements}
QL was partially supported by the NYU Research Catalyst Prize and the Department of Energy under ASCR Award DE-SC0024721.
\bibliography{references}

\newpage
\onecolumn
\appendix
\section{MISSING PROOFS}
\subsection{Proof of Theorem 1}
\begin{proof}
Recall the definition
\[
B_{\dD,S} = \min_{R:\mathbb{R}^d \to \mathbb{R}^m} \mathbb{E}_{\vx \sim \dD} \mathbb{E}_{\vy \sim S(g(\vx))} \left\| R(\vy) - \vx \right\|^2.
\]
By the Bayesian Cramér-Rao lower bound, for any reconstruction algorithm $R$ we have that:
\[
\mathbb{E}_{\vx \sim \dD} \mathbb{E}_{\vy \sim S(g(\vx))}  \left[ \left( R(\vy) - \vx \right) \left( R(\vy) - \vx \right)^\top \right] \succeq \mJ_B^{-1},
\]
where \( \mJ_B = \mJ_P + \mJ_D \) and \( \mJ_D \coloneq \mathbb{E}_{\vx\sim\dD} \left[ \mJ_F(\vx) \right] \).

Therefore:
\begin{align*}
\mathbb{E}_{\vx \sim \dD} \mathbb{E}_{\vy \sim S(g(\vx))} \left\| R(\vy) - \vx \right\|^2&=\tr\left(\mathbb{E}_{\vx \sim \dD} \mathbb{E}_{\vy \sim S(g(\vx))}  \left[ \left( R(\vy) - \vx \right) \left( R(\vy) - \vx \right)^\top \right]\right)\\
&\ge\tr\left(\mJ_B^{-1}\right).
\end{align*}

Since both \( \mJ_D \) and \( \mJ_P \) are Fisher information matrices and hence symmetric, we could apply Weyl's inequality to bound the eigenvalues of \( \mJ_B \). Let $\lambda_i$ denote sorted eigenvalues with $\lambda_1$ being the smallest and $\lambda_d$ the largest. For the eigenvalues \( \lambda_i \) of \( \mJ_B \), we have:
\[
\lambda_i(\mJ_B) \le \lambda_i(\mJ_D) + \lambda_1(\mJ_P).
\]
This implies that
\begin{align*}
\tr(\mJ_B^{-1}) &= \sum_{i=1}^{d} \frac{1}{\lambda_i(\mJ_B)} \\
&\ge \sum_{i=1}^{d} \frac{1}{\lambda_i(\mJ_D) + \lambda_1(\mJ_P)} \\
&\ge \frac{d^2}{\tr(\mJ_D) + d \cdot \lambda_1(\mJ_P)}.
\end{align*}
The last equation is from Cauchy's inequality since $\lambda_i(\mJ_D) + \lambda_1(\mJ_P)>0$.

Substituting \( \operatorname{tr}(\mJ_D) = \mathbb{E}_{x \sim \dD}\left[\operatorname{tr}(\mJ_F(\vx))\right] \), we obtain:
\[
\operatorname{tr}(\mJ_B^{-1}) \ge \frac{d^2}{\mathbb{E}_{x \sim \dD}\left[\operatorname{tr}(\mJ_F(\vx))\right] + d \cdot \lambda_1(\mJ_P)}.
\]

Thus, we have shown that
\[
B_{\dD,S} \ge \frac{d^2}{\mathbb{E}_{x \sim \dD}\left[\operatorname{tr}(\mJ_F(\vx))\right] + d \cdot \lambda_1(\mJ_P)}.
\]
\end{proof}

\subsection{Proof of Theorem 2}
To prove the theorem, we first need to calculate $\mJ_F(\vx)$:
\begin{lemma*}
\label{lem:fishernoise}
Let $\mJ_F(\vx)$ be the Fisher information matrix defined in Theorem 1. Let $\vy=S(\vx)$ be gradients defended with gradient noise using covariance matrix $\mSigma$. We have that:
\[
\mJ_F(\vx) = \nabla_{\vx} g(\vx) \mSigma^{-1} \nabla_{\vx} g(\vx)^\top.
\]
\end{lemma*}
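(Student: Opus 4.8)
The plan is to compute the score function $\nabla_\vx \log p_{S(\vx)}(\vy)$ directly and then take its outer-product expectation, exploiting that under the noise defense $\vy \sim \mathcal{N}(g(\vx),\mSigma)$ is a Gaussian \emph{location} family: all dependence on $\vx$ enters through the mean $g(\vx)$, while the covariance $\mSigma$ is held fixed. This reduces the lemma to the textbook Fisher-information formula for a Gaussian location model, with $g(\vx)$ playing the role of the location parameter.

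First I would write the conditional log-density explicitly. Since $\vy = g(\vx) + \vepsilon$ with $\vepsilon \sim \mathcal{N}(\vzero,\mSigma)$, we have $\log p_{S(\vx)}(\vy) = -\tfrac12(\vy - g(\vx))^\top \mSigma^{-1}(\vy - g(\vx)) - \tfrac12\log\det(2\pi\mSigma)$, where only the quadratic term depends on $\vx$. Next I would apply the chain rule: writing the residual $\vr \coloneq \vy - g(\vx)$ and letting $\nabla_\vx g(\vx)$ denote the $m\times d$ Jacobian with entries $[\nabla_\vx g(\vx)]_{ij} = \partial g_j/\partial x_i$, differentiation of the quadratic form together with $\partial\vr/\partial x_i = -\partial g/\partial x_i$ yields $\nabla_\vx \log p_{S(\vx)}(\vy) = \nabla_\vx g(\vx)\,\mSigma^{-1}\,\vr$; the log-determinant term drops out since it is independent of $\vx$.

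Then I would form the outer product $\nabla_\vx\log p\,(\nabla_\vx\log p)^\top = \nabla_\vx g(\vx)\mSigma^{-1}\vr\vr^\top\mSigma^{-1}\nabla_\vx g(\vx)^\top$ and take the expectation over $\vy$, equivalently over $\vr\sim\mathcal{N}(\vzero,\mSigma)$. Using $\mathbb{E}[\vr\vr^\top]=\mSigma$ and the cancellation $\mSigma^{-1}\mSigma\mSigma^{-1}=\mSigma^{-1}$, the expression collapses to $\mJ_F(\vx) = \nabla_\vx g(\vx)\mSigma^{-1}\nabla_\vx g(\vx)^\top$, which is the claim. The regularity Assumptions 1 to 5 invoked in Theorem 1 justify using the score representation of $\mJ_F$ in the first place.

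The main obstacle is purely bookkeeping rather than conceptual: keeping the Jacobian orientation consistent so that the $m\times d$, $d\times d$, and $d\times m$ factors chain into the correct $m\times m$ Fisher matrix, and correctly carrying the factor of $-\tfrac12$ and the sign from $\partial\vr/\partial x_i = -\partial g/\partial x_i$ through the differentiation. Once $g(\vx)$ is identified as the location parameter, the only substantive computation is the second-moment identity $\mathbb{E}[\vr\vr^\top]=\mSigma$, after which the result follows immediately.
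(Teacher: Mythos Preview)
Your proposal is correct and follows essentially the same route as the paper: write out the Gaussian log-likelihood, differentiate the quadratic form via the chain rule to obtain the score $\nabla_\vx g(\vx)\,\mSigma^{-1}(\vy-g(\vx))$, take the outer product, and use $\mathbb{E}[(\vy-g(\vx))(\vy-g(\vx))^\top]=\mSigma$ to collapse the middle factor. The only cosmetic difference is that you name the residual $\vr$ and discuss Jacobian orientation explicitly, whereas the paper leaves these implicit.
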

\begin{proof}
$f:\RR^d\to\RR^n$ is the function from input data to model gradients.

Given \( \vy = g(\vx) + \epsilon \) with \(\epsilon \sim \mathcal{N}(0, \Sigma)\), we have the log-likelihood function:
\[
\log p(\vy | \vx) = -\frac{d}{2} \log(2\pi) - \frac{1}{2} \log |\Sigma| - \frac{1}{2} (\vy - g(\vx))^\top \Sigma^{-1} (\vy - g(\vx)).
\]

The gradient of the log-likelihood with respect to \(\vx\) is:
\[
\nabla_{\vx} \log p(\vy | \vx)^\top = (\vy - g(\vx))^\top \Sigma^{-1}  \nabla_{\vx} g(\vx)^\top.
\]

By definition:
\begin{align*}
    \mJ_F(\vx) &= \mathbb{E}_{\vy | \vx} \left[ \nabla_{\vx} \log p(\vy | \vx) \nabla_{\vx} \log p(\vy | \vx)^\top \right]\\
    &=\nabla_{\vx} g(\vx) \mathbb{E}_{\vy | \vx} \left[\Sigma^{-1} (\vy - g(\vx))   (\vy - g(\vx))^\top \Sigma^{-1}\right] \nabla_{\vx} g(\vx)^\top.
\end{align*}

Since \(\mathbb{E}_{\vy | \vx} \left[ (\vy - g(\vx)) (\vy - g(\vx))^\top \right] = \Sigma\),
\[
\mJ_F(\vx) = \nabla_{\vx} g(\vx) \Sigma^{-1} \nabla_{\vx} g(\vx)^\top.
\]

\end{proof}

Now we are ready to prove Theorem 2:

\begin{proof}
We want to minimize $\E_{\vx\sim\dD}\tr\left(\mJ_F(\vx)\right)$. By the lemma above, we have that
\[
\mJ_F(\vx) = \nabla_{\vx} g(\vx) \Sigma^{-1} \nabla_{\vx} g(\vx)^\top=
\sum_{i=1}^d  \frac{\norm{\nabla_\vx g_i(\vx)}^2}{\mSigma_{i,i}}
\]
when $\mSigma$ is diagonal.

The second-order utility for the defense equals:
\[
U_2(S,\Theta)=-\E_{\vx\sim\dD}\sum_{i=1}^d\left(\frac{\partial L(\Theta,\vx)}{\partial\evx_i}\right)^2 \emSigma_{i,i}=-\sum_{i=1}^d \E_{\vx\sim\dD}g_i(\vx)^2 \emSigma_{i,i},
\]
where the second equation is from the definition of $g_i(\vx)$. By Cauchy's inequality, we have that
\begin{align*}
\E_{\vx\sim\dD}\tr\left(\mJ_F(\vx)\right)&=\sum_{i=1}^d  \frac{\E_{\vx\sim\dD}\norm{\nabla_\vx g_i(\vx)}^2}{\mSigma_{i,i}}\\
&\ge\frac{\left(\sum_{i=1}^d \sqrt{\E_{\vx\sim\dD}\norm{\nabla_\vx g_i(\vx)}^2\cdot \E_{\vx\sim\dD}g_i(\vx)^2}\right)^2}{\sum_{i=1}^d \E_{\vx\sim\dD}g_i(\vx)^2 \emSigma_{i,i}}\\
&\ge \frac{1}{C}\left(\sum_{i=1}^d \sqrt{\E_{\vx\sim\dD}\norm{\nabla_\vx g_i(\vx)}^2\cdot \E_{\vx\sim\dD}g_i(\vx)^2}\right)^2.
\end{align*}
The first inequality holds when and only when
\[
\emSigma_{i,i}\propto \sqrt{\frac{\E_{\vx\sim\dD}\norm{\nabla_\vx g_i(\vx)}^2}{ \E_{\vx\sim\dD}g_i(\vx)^2}},
\]
and the second inequality holds after choosing the proper $\lambda(\vx)$ such that
\[
\emSigma_{i,i}=\lambda(\vx)\sqrt{\frac{\E_{\vx\sim\dD}\norm{\nabla_\vx g_i(\vx)}^2}{ \E_{\vx\sim\dD}g_i(\vx)^2}}
\]
satisfies $U_2(S,\Theta)=-C$.

Therefore $\tr\left(\mJ_F(\vx)\right)$ is minimized when any only when
\[
\emSigma_{i,i}=\lambda(\vx)\sqrt{\frac{\E_{\vx\sim\dD}\norm{\nabla_\vx g_i(\vx)}^2}{ \E_{\vx\sim\dD}g_i(\vx)^2}},
\]
for some $\lambda(\vx)$. The reconstruction error lower bound is maximized when the above applies.
\end{proof}

\subsection{Proof of Theorem 3}
We only need to replace $g(\vx)$ by $g_P(\vx)$ in Theorem 2. This is because the two defenses have the same utility measure when the noise covariance matrices are the same. Moreover, the lower bound only differs by changing $\nabla_\vx g(\vx)$ to $\nabla_\vx g_P(\vx)$.

Therefore Theorem 3 directly follows by replacing $g(\vx)$ with $g_P(\vx)$ in the optimal defense in Theorem 2.

\subsection{Proof of Theorem 4}
For generalized gradient pruning, we need to use mixed defense. Similar as Theorem 2, we first calculate $\tr(\mJ_{F,S_0(\cdot,r)}(x))$ in the mixed defense version of Theorem 1. For the noisy gradient pruning defense, the identifier $r$ is $\sA$, $S_0(\cdot,\sA)$ is noisy gradient pruning that prunes parameters in the set $\sA$. $\mathcal{R}$ is the distribution generating the pruning set. We find the optimal distribution $\mathcal{R}$.

\begin{lemma*}
\label{lem:fisherprune}
Let $\tr(\mJ_{F,S_{\prune,\mSigma,\sA}}(x))$ be the Fisher information matrix for mixed defense. Let $\vy=S(\vx)$ be the model gradients defended by noisy gradient pruning with covariance matrix $\mSigma=\epsilon\mI_d$ and pruning set $\sA\sim\mathcal{R}$. The Fisher information matrix $\tr(\mJ_{F,S_0(\cdot,r)}(x))$ defined in Theorem 1 equals:
\[
\tr(\mJ_{F,S_0(\cdot,r)}(x)) =\frac{1}{\epsilon} \sum_{i\notin\sA}\norm{\nabla_\vx g_i(\vx)}^2.
\]
\end{lemma*}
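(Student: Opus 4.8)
The plan is to reduce this to the Fisher-information computation already carried out in the lemma used to prove Theorem 2, restricting attention to the unpruned coordinates. For a fixed pruning set $\sA$, the defense $S_{\prune,\sA,\mSigma}$ with $\mSigma=\epsilon\mI_d$ acts as independent additive Gaussian noise of variance $\epsilon$ on each unpruned coordinate $i\notin\sA$, while every pruned coordinate $i\in\sA$ is set deterministically to $0$. First I would write down the conditional density $p(\vy\mid\vx)$ and observe that it factorizes over coordinates, with the pruned coordinates contributing a point mass at $0$ that does not depend on $\vx$.

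The key step is to compute the score $\nabla_{\vx}\log p(\vy\mid\vx)$. Since the pruned coordinates carry an $\vx$-independent point mass, they drop out of the score entirely, and only the Gaussian factors for $i\notin\sA$ survive. Following the same computation as in the Theorem 2 lemma, this gives
\[
\nabla_{\vx}\log p(\vy\mid\vx)=\sum_{i\notin\sA}\frac{\vy_i-g_i(\vx)}{\epsilon}\,\nabla_{\vx}g_i(\vx).
\]
Forming the outer product and taking $\mathbb{E}_{\vy\mid\vx}$, I would use that the noise terms $\vy_i-g_i(\vx)$ are independent across $i$ with mean zero and variance $\epsilon$, so the cross terms vanish and each diagonal term yields a factor $\epsilon/\epsilon^2=1/\epsilon$. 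This produces
\[
\mJ_F(\vx)=\frac{1}{\epsilon}\sum_{i\notin\sA}\nabla_{\vx}g_i(\vx)\,\nabla_{\vx}g_i(\vx)^\top,
\]
which is exactly the expression from the Theorem 2 lemma with the sum restricted to unpruned indices. Taking the trace and using $\tr\!\left(\nabla_{\vx}g_i(\vx)\nabla_{\vx}g_i(\vx)^\top\right)=\norm{\nabla_{\vx}g_i(\vx)}^2$ gives the claimed identity.

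The main obstacle is the careful treatment of the pruned coordinates: because they are deterministic, the full vector $\vy$ is supported on a lower-dimensional affine subspace, so $p(\vy\mid\vx)$ is degenerate and one cannot naively write a Gaussian density on all of $\RR^d$. I would handle this by defining the likelihood on the correct support—equivalently, by working only with the unpruned subvector of $\vy$—where it is a genuine Gaussian density; the pruned components then contribute nothing to the score because their (degenerate) law is independent of $\vx$. Once this is set up correctly, the remainder is the same Gaussian score calculation as in the Theorem 2 lemma.
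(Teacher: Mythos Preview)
Your proposal is correct and follows essentially the same route as the paper: both apply the Gaussian Fisher information formula from the Theorem~2 lemma to the unpruned subvector of $\vy$, obtaining $\mJ_F(\vx)=\frac{1}{\epsilon}\sum_{i\notin\sA}\nabla_{\vx}g_i(\vx)\nabla_{\vx}g_i(\vx)^\top$ (the paper writes this as $\frac{1}{\epsilon}\nabla_{\vx_{\{1:d\}-\sA}}g(\vx)\nabla_{\vx_{\{1:d\}-\sA}}g(\vx)^\top$), and then take the trace. Your explicit handling of the degenerate pruned coordinates is more careful than the paper's, which simply restricts to the unpruned components without comment.
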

\begin{proof}
By the lemma used in the proof of Theorem 2, the left-hand side equals
\begin{align*}
\tr(\mJ_{F,S_0(\cdot,r)}(x)) =&\tr(\frac{1}{\epsilon}\nabla_{\vx_{\{1:d\}-\sA}}g(\vx)\nabla^\top_{\vx_{\{1:d\}-\sA}}g(\vx))\\
=&\frac{1}{\epsilon}\norm{\nabla_{\vx_{\{1:d\}-\sA}}g(\vx)}_F^2\\
=&\frac{1}{\epsilon} \sum_{i\notin\sA}\norm{\nabla_\vx g_i(\vx)}^2.
\end{align*}
The last equation is true since for each $i$, $\nabla_\vx g_i(\vx)$ corresponds to a column in $\nabla_{\vx_{\{1:d\}-\sA}}g(\vx)$.
\end{proof}

Now we can prove Theorem 4:
\begin{proof}
For the training utility, we have:
\begin{align*}
U_1(S,\Theta)=&\E_{\sA\sim\mathcal{R}}\EE_{x\sim\dD}\sum_{i\notin\sA}\left(\frac{\partial L(\Theta,\vx)}{\partial\evx_i}\right)^2\\
=&\E_{\sA\sim\mathcal{R}}\sum_{i\notin\sA}\EE_{x\sim\dD}\left(g_i(\vx)\right)^2\\
=&\sum_{i=1}^d P_{\sA\sim\mathcal{R}}(i\notin\sA)\E_{x\sim\dD}\left(g_i(\vx)\right)^2,
\end{align*}
where $P_{\sA\sim\mathcal{R}}(i\notin\sA)$ is the probability of $i\notin\sA$ when $\sA$ is sampled from $\mathcal{R}$.
With the given utility constraint $U_1(S,\Theta)\ge C$ we want to minimize $\E_{\sA\sim\mathcal{R}}\sum_{i\notin\sA}\EE_{x\sim\dD} \norm{\nabla_\vx g_i(\vx)}^2.$
By the previous lemma, we have:
\[\E_{\sA\sim\mathcal{R},\vx\sim\dD}\tr(\mJ_{F,\sA}(\vx))=\E_{\sA\sim\mathcal{R}}\sum_{i\notin\sA}\EE_{x\sim\dD} \norm{\nabla_\vx g_i(\vx)}^2=\sum_{i=1}^d P_{\sA\sim\mathcal{R}}(i\notin\sA)\EE_{x\sim\dD} \norm{\nabla_\vx g_i(\vx)}^2.\]
For all $i$, we have that $0\le P_{\sA\sim\mathcal{R}}(i\notin\sA)\le 1$, $\EE_{x\sim\dD} \norm{\nabla_\vx g_i(\vx)}^2>0$, and $\E_{x\sim\dD}\left(g_i(\vx)\right)^2>0$. Therefore, for the optimal defense minimizing $\E_{\sA\sim\mathcal{R},\vx\sim\dD}\tr(\mJ_{F,\sA}(\vx))$, the two restrictions apply:
\begin{itemize}
    \item $U_1(S,\Theta)=C.$
    \item If $P_{\sA\sim\mathcal{R}}(i\notin\sA)>0$, then for any $j$ such that
    \[\frac{\EE_{x\sim\dD} \norm{\nabla_\vx g_i(\vx)}^2}{\E_{x\sim\dD}\left(g_i(\vx)\right)^2}>\frac{\EE_{x\sim\dD} \norm{\nabla_\vx g_j(\vx)}^2}{\E_{x\sim\dD}\left(g_j(\vx)\right)^2},\]
    we must have $P_{\sA\sim\mathcal{R}}(j\notin\sA)=1$.
\end{itemize}
If any of the above does not apply, we could trivially modify $\mathcal{R}$ to improve the lower bound while staying within the utility budget. If $U_1(S,\Theta)>C$ and a parameter is pruned with probability smaller than 1, we could slightly increase the probability of pruning that parameter. This slightly decreases utility but yields a better reconstruction lower bound. If the second restriction does not apply, we could increase the probability of pruning $j$ and decrease the probability of pruning $i$ to have the same training utility and obtain a higher reconstruction error lower bound. When the restrictions apply, the resulting defense follows our theorem.
\end{proof}

Furthermore, in locally optimal gradient pruning, adding our optimal noise instead of standard Gaussian noise after the gradient pruning step yields the same optimal defense (optimal pruning set). 
To show the claim, notice that in this case, the Fisher information matrix in the lower bound is:
$$\tr(\mJ_{F,S_{\prune,\mSigma,\sA}}(x)) \approx\frac{1}{\epsilon} \sum_{i\notin\sA}\norm{\nabla_\vx g_i(\vx)}|g_i(\vx)|.$$
Since the utility function remains the same, the index for optimal pruning is now
\[k_i=\frac{\norm{\nabla_\vx g_i(\vx)}|g_i(\vx)|}{|g_i(\vx)|^2}=\frac{\norm{\nabla_\vx g_i(\vx)}}{|g_i(\vx)|},\]
which is equivalent to the locally optimal pruning with standard noise.

\subsection{Proof of Lemma 1}
\begin{proof}
Notice that for any given $\vy\in\R^d$,
\begin{equation*}
\frac{\partial g(\vx+\alpha\vy)}{\partial\alpha}=\nabla_\vx g(\vx+\alpha\vy)\odot\vy,
\end{equation*}
where $\odot$ represents element-wise multiplication.
Therefore
\[
\norm{\frac{\partial g(\vx+\alpha\vy)}{\partial\alpha}}_{\alpha=0}=\norm{\nabla_\vx g(\vx)\odot\vy}.
\]
When $\vy\sim\mathcal{N}(0,\mI_d)$, $\norm{\nabla_\vx g(\vx)\odot\vy}$ follows a normal distribution with mean 0 and variance $\norm{\nabla_\vx g(\vx)}^2$.

Therefore we have that 
$$\frac{\norm{\frac{\partial g(\vx+\alpha\vy)}{\partial\alpha}}_{\alpha=0}}{\norm{\nabla_\vx g(\vx)}}\sim\mathcal{N}(0,1),$$
furthermore,
$$A:=\frac{\sum_{i=1}^k\norm{\frac{\partial g(\vx+\alpha\vx_i)}{\partial\alpha}}_{\alpha=0}}{\norm{\nabla_\vx g(\vx)}}\sim\chi^2(k).$$
Since $\E(A)=k$ and $\text{Var}(A)=2k$, by Markov's inequality we have that
\begin{align*}
    P(|A-k|>k\epsilon)\le \frac{2k}{k^2\epsilon^2}=\frac{2}{k\epsilon^2}.
\end{align*}
Since 
\[\left|\norm{\nabla_\vx f(\vx)}^2-\frac{1}{k} \sum_{j=1}^k \norm{\frac{\partial f_i(\vx + \alpha \vx_j)}{\partial\alpha}}^2_{\alpha=0}\right|\le \epsilon \norm{\nabla_\vx f(\vx)}^2\]
is equivalent to $|A-k|>k\epsilon$, we finished the proof.
\end{proof}

\section{OPTIMAL DEFENSE WITH ReLU ACTIVATION}
\label{sec:thmrelu}
In this section, we briefly discuss how we modify our optimal defenses when $\E_{\vx\sim\dD}g_i(\vx)^2=0$. The defenses for the entries where the gradients are not 0 are the same, but we deal with entries with gradient 0 separately.

For Gradient Pruning, pruning gradients that are 0 do not affect the defended gradients. Therefore, we focus on analyzing how we apply Optimal Gradient Noise.  By the definition of second-order utility, the second-order utility is unaffected by the noise scale added on parameters with gradient 0. If we follow the proof of Theorem 2, the reconstruction lower bound scales negatively with 
\[\sum_{i=1}^d\frac{\E_{\vx\sim\dD}\norm{\nabla_\vx g_i(\vx)}^2}{\emSigma_{i,i}}.\]

Since $\E_{\vx\sim\dD}g_i(\vx)^2=0$, we have that $g_i(\vx)$ is constant on the support of $\dD$ so $\E_{\vx\sim\dD}\norm{\nabla_\vx g_i(\vx)}^2=0.$ Therefore, the noise scale does not affect the reconstruction lower bound and any noise scale for these parameters are optimal. This typically happens when the model has activation functions that are constant on an interval (e.g. ReLU). Nonetheless, since a larger noise scale typically decreases training utility, the optimal method would be not to add noise to the gradients.

However, the above case does not completely cover problems in the locally optimal defenses. In the locally optimal versions, we used the values at $\vx$ as approximations of expectations. Therefore we might have $\norm{\nabla_\vx g_i(\vx)}^2>0$. In this case, theoretically we should set the noise to be as large as possible, which deviates from reality. To resolve this problem, a good solution would be to set an upper limit to the noise scale and clip the noise scales added to the gradients.

To summarize, for optimal gradient pruning, pruning gradients with a scale of 0 is trivial. For optimal gradient noise (and its application in DP-SGD), we set an upper limit to the noise scale.

\section{EXPERIMENT DETAILS}
For all datasets and algorithms, we used the implementation in Algorithm 1 with $k=10$ as the number of samples and $c=10^{-6}$ as the small constant. When comparing our optimal noise with DP-SGD, we applied clipping threshold 1 for DP-SGD and included the same clipping step before adding our optimal noise.

\subsection{Experiments with MNIST}
For experiments on the MNIST dataset, we used a Convolutional Neural Network with 120k parameters. To avoid the special case where the model utilizes the ReLU activation function, we use the LeakyReLU activation function instead. We trained the model on a subset of size 4096 from the whole dataset and used SGD algorithms with batch size 64. We simulated 4 clients each possessing one-fourth of the dataset (1024 samples). When training, each mini-batch contains data from all 4 clients, with each client providing 16 samples to form a 64-sample mini-batch. The gradients are computed and defended separately and then averaged to be provided to the central server. For the training process with DP-SGD (or our optimal noise) applied, we used the Adam optimizer with learning rate $10^{-3}$. For the training process with gradient pruning (or our optimal pruning) applied, we used Adam with learning rate $5\times 10^{-4}$. For the reconstruction process, we used the Inverting Gradients algorithm with a budget of 2000 updates. The experiments are conducted on a Nvidia RTX 2050. 

For the scatter plot, we trained the model on 1 batch of 64 samples for 5 gradient descent updates. Each gradient used for update is the average of 4 gradients calculated and defensed separately from 4 clients. We used the Adam optimizer with learning rate $10^{-3}$.

\subsection{Experiments with CIFAR-10}
For experiments on the CIFAR-10 dataset, we used a 2.9M parameter ConvNet with the LeakyReLU activation. For the scatter plot, we measured the training utility by training on a batch of 8 samples. For comparing DP-SGD with our optimal noise, we used the Adam optimizer for 5 steps with the learning rate being $10^{-4}$. For comparing gradient pruning with our optimal pruning, we used Adam with the learning rate being $5\times10^{-5}$ (for slower convergence). Every update we simulated 4 clients each calculating and defending gradients separately like in MNIST. For reconstruction, we reconstructed the images 2-at-a-time from the shared model gradients using the Inverting Gradients algorithm with a budget of 1000 updates. The experiments are conducted on a Nvidia RTX 4090D.

\section{ADDITIONAL EXPERIMENTS}
\label{sec:additionalexperiments}
\subsection{Effect of Noise Scale on DP-SGD in CIFAR-10}
\begin{figure}
%Figure 11
    \centering
    \includegraphics[width=0.7\linewidth]{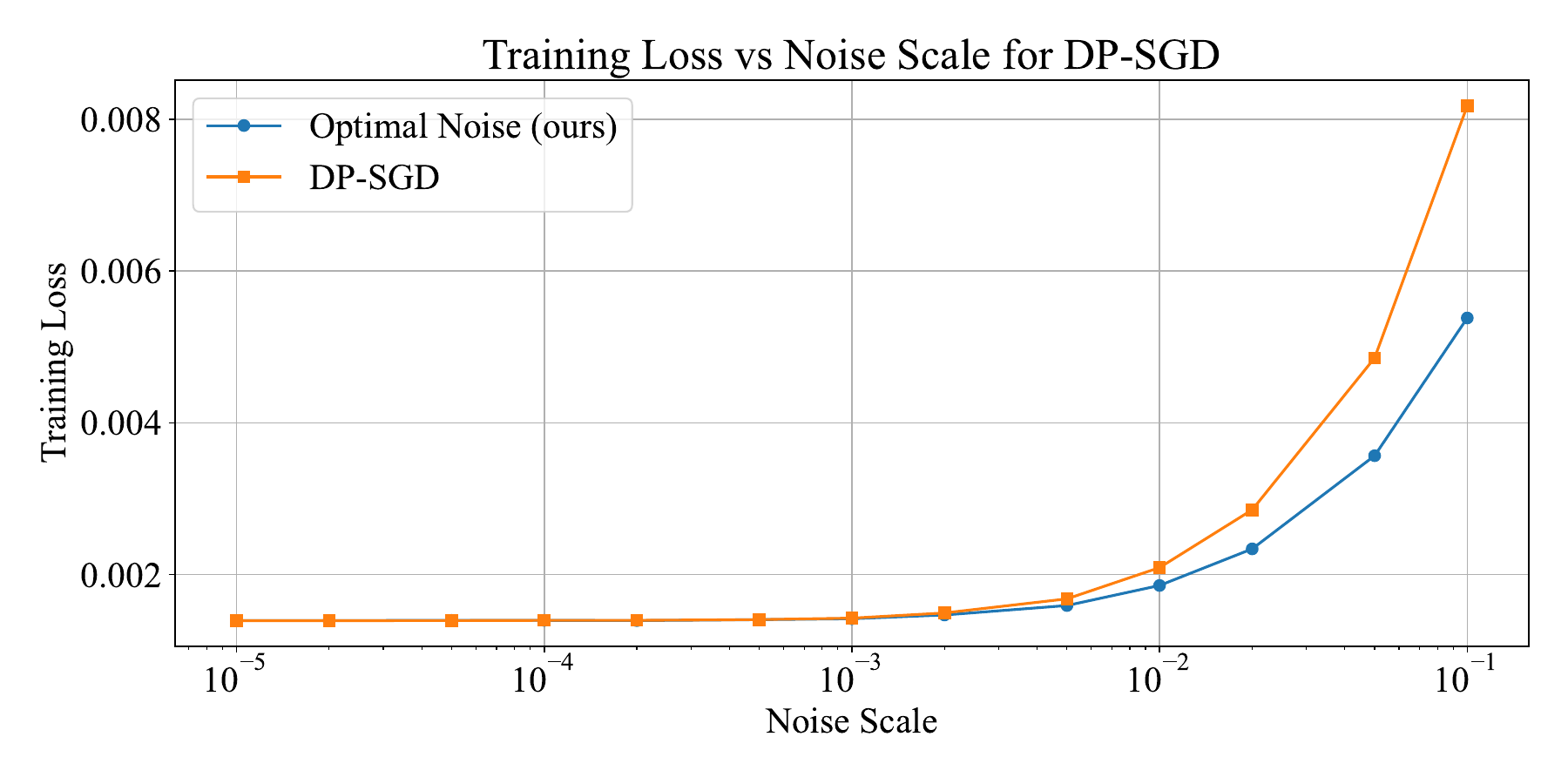}
    \caption{Effect of Noise Scale on Training Utility.}
    \label{fig:cifarnoisecurve}
\end{figure}

\begin{figure}
%Figure 12
    \centering
    \includegraphics[width=0.7\linewidth]{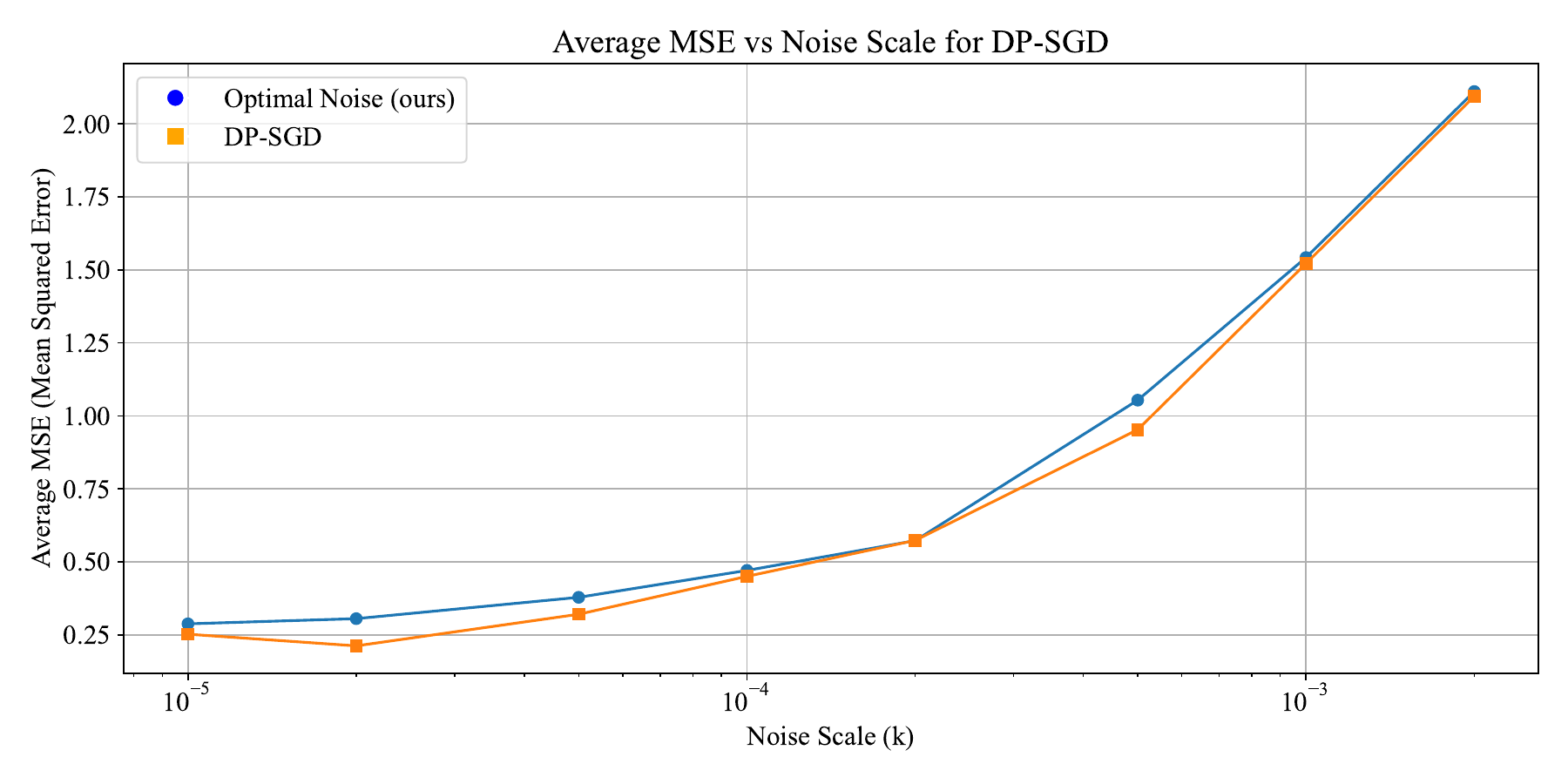}
    \caption{Effect of Noise Scale on Average Reconstruction MSE.}
    \label{fig:cifarnoisemsecurve}
\end{figure}

Since our experiments in the previous sections only cover small noise scales, we visualize the effects of the larger noise scales on CIFAR-10 in Figure \ref{fig:cifarnoisecurve} and \ref{fig:cifarnoisemsecurve}. With higher noise scales, it is clearer that our optimal noise has higher training utility and (slightly) higher reconstruction error than DP-SGD.

\begin{table}[h!]
\centering
\begin{tabular}{|c|c|}
\hline
\textbf{Layer}                & \textbf{Parameters} \\ \hline
conv\_layers.0.weight         & 288                \\ \hline
conv\_layers.0.bias           & 32                 \\ \hline
conv\_layers.3.weight         & 18,432             \\ \hline
conv\_layers.3.bias           & 64                 \\ \hline
fc\_layers.1.weight           & 100,352            \\ \hline
fc\_layers.1.bias             & 32                 \\ \hline
fc\_layers.3.weight           & 320                \\ \hline
fc\_layers.3.bias             & 10                 \\ \hline
\end{tabular}
\caption{Number of parameters in the neural network.}
\label{tab:layer_params}
\end{table}

\begin{figure}
    \centering
    \includegraphics[width=0.45\linewidth]{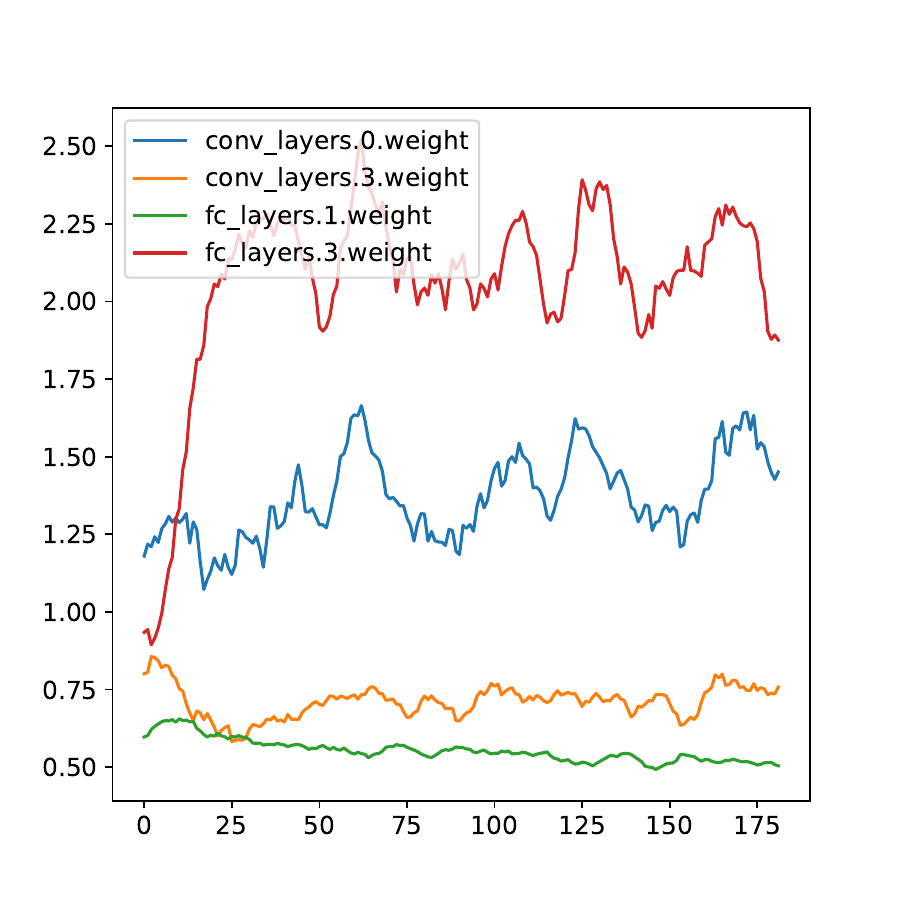}
    \includegraphics[width=0.45\linewidth]{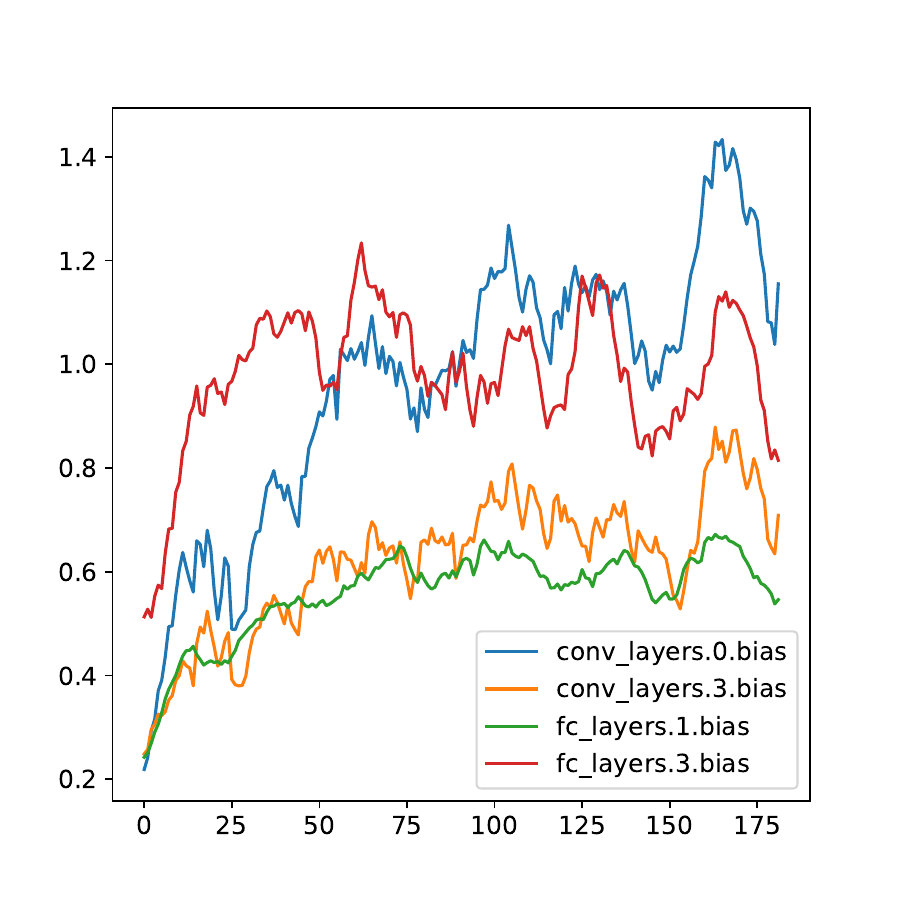}
    \caption{Average noise each layer smoothed by window size 10. Left: weight matrices. Right: bias matrices.}
    \label{fig:MNISTnorm}
\end{figure}

\subsection{Average Optimal Noise Through the Training Process}
We additionally show how our method differs from DP-SGD by visualizing average noise added to each layer during the training process. The experiment utilized the MNIST dataset. As in Figure \ref{fig:MNISTnorm}, some layers are given significantly higher noise than other layers. We additionally include how many parameters each layer contains in Table \ref{tab:layer_params} and a code snippet of the model in Pytorch for reference.

\begin{lstlisting}[language=python]
class SimpleConvNet(nn.Module):
    def __init__(self, num_classes=10, num_channels=1):
        super(SimpleConvNet, self).__init__()

        # Convolutional layers
        self.conv_layers = nn.Sequential(
            # conv_layers.0 (Conv2d)
            nn.Conv2d(num_channels, 32, kernel_size=3, padding=1),  
            nn.LeakyReLU(), 
            nn.MaxPool2d(2),

            # conv_layers.3 (Conv2d)
            nn.Conv2d(32, 64, kernel_size=3, padding=1),            
            nn.LeakyReLU(),  
            nn.MaxPool2d(2) 
        )
        
        # Fully connected layers
        self.fc_layers = nn.Sequential(
            nn.Flatten(),    
            # fc_layers.1 (Linear)
            nn.Linear(64 * 7 * 7, 32), 
            nn.LeakyReLU(),   
            # fc_layers.3 (Linear)
            nn.Linear(32, num_classes)                               
        )
    
    def forward(self, x):
        x = self.conv_layers(x)
        x = self.fc_layers(x)
        return x
\end{lstlisting}

\end{document}